\documentclass{article}





\usepackage[nonatbib,final]{icbinb_neurips_2020}

\usepackage[utf8]{inputenc} 
\usepackage[T1]{fontenc}    
\usepackage{hyperref}       
\usepackage{url}            
\usepackage{booktabs}       
\usepackage{amsfonts}       
\usepackage{nicefrac}       
\usepackage{microtype}      
\usepackage{amsmath, amssymb, amsthm}
\usepackage{bbm}
\usepackage[round]{natbib}
\usepackage{wrapfig}
\usepackage{algorithm}
\usepackage{algorithmic}
\usepackage{subcaption}
\usepackage{empheq}
\usepackage{xspace}

\newcommand*{\eg}{{\it e.g.}\@\xspace}

\newcommand{\E}{\mathbb{E}}
\DeclareMathOperator*{\argmin}{arg\,min}
\DeclareMathOperator*{\argmax}{arg\,max}

\DeclareMathOperator*{\trace}{tr}
\newcommand{\N}{\mathcal{N}}
\newcommand{\R}{\mathbb{R}}
\newcommand{\Meka}{\text{\textsc{Meka}}}
\newcommand{\sgd}{\text{\textsc{SGD}}}

\newcommand{\adam}{\text{\textsc{Adam}}}
\newcommand{\adadelta}{\text{\textsc{Adadelta}}}
\newcommand{\svrg}{\text{\textsc{SVRG}}}
\newcommand{\adagrad}{\text{\textsc{AdaGrad}}}
\newcommand{\adameka}{\textsc{AdaMeka}}

\newtheorem{proposition}{Proposition}
\newtheorem{lemma}{Lemma}

\usepackage{tikz}
\usetikzlibrary{arrows,shapes,plotmarks,decorations.pathmorphing}
\usetikzlibrary{backgrounds,calc,positioning,fadings}

\tikzset{every picture/.style={node distance=2cm}}
\tikzset{>=stealth'} 
\tikzstyle{graphnode} = 
   [circle,minimum size=22pt,text centered,text
     width=22pt,inner sep=0pt] 
\tikzstyle{var}   =[graphnode,fill=white]
\tikzstyle{obs}   =[graphnode,fill=black,text=white]

\tikzstyle{fac}   =[rectangle,fill=gray,minimum size=5pt]
\tikzstyle{facprior} =[rectangle,fill=black,text=white,minimum size=5pt]
\tikzstyle{edge}  =[draw=white,double=black,thick,-]
\tikzstyle{prior} =[rectangle, fill=black, minimum size=
5pt, inner sep=0pt]
\tikzstyle{dirprior} = [circle, fill=black, minimum
size=5pt, inner sep=0pt]

\definecolor{TUred}{RGB}{141,45,57}
\definecolor{TUdark}{RGB}{55,65,74}
\definecolor{TUgold}{RGB}{174,159,109}
\definecolor{TUlightgold}{RGB}{239,236,226}
\definecolor{TUgray}{RGB}{175,179,183}

\definecolor{TUsecondary1}{RGB}{65,90,140}   
\definecolor{TUsecondary2}{RGB}{0,105,170}   
\definecolor{TUsecondary3}{RGB}{80,170,200}  
\definecolor{TUsecondary4}{RGB}{130,185,160} 
\definecolor{TUsecondary5}{RGB}{125,165,75}  
\definecolor{TUsecondary6}{RGB}{50,110,30}   
\definecolor{TUsecondary7}{RGB}{200,80,60}   
\definecolor{TUsecondary8}{RGB}{175,110,150} 
\definecolor{TUsecondary9}{RGB}{180,160,150} 
\definecolor{TUsecondary10}{RGB}{215,180,105}
\definecolor{TUsecondary11}{RGB}{210,150,0}  
\definecolor{TUsecondary12}{RGB}{145,105,70} 

\newif\ifcomments

\commentstrue

\ifcomments
\newcommand{\ricky}[1]{\textcolor{blue}{[RC: #1]}}

\else
\newcommand{\ricky}[1]{}
\fi

\title{Self-Tuning Stochastic Optimization with Curvature-Aware Gradient Filtering}

%

\author{%
Ricky T. Q. Chen\thanks{Equal contribution.}\;
\thanks{University of Toronto. Vector Institute. \texttt{\{rtqichen, choidami, duvenaud\}@cs.toronto.edu}} \\
\And
Dami Choi\footnotemark[1]\;\;\footnotemark[2] \\
\And
Lukas Balles\footnotemark[1]\; 
\thanks{Max Planck Institute for Intelligent Systems, T\"ubingen, Germany. \texttt{\{lballes, ph\}@tue.mpg.de}} \\
\And
David Duvenaud\footnotemark[2] \\
\And
Philipp Hennig\footnotemark[3] \\
}

\begin{document}

\maketitle

\begin{abstract}

Standard first-order stochastic optimization algorithms base their updates solely on the average mini-batch gradient, and it has been shown that tracking additional quantities such as the curvature can help de-sensitize common hyperparameters.
Based on this intuition, we explore the use of exact per-sample Hessian-vector products and gradients to construct optimizers that are self-tuning and hyperparameter-free.
Based on a dynamics model of the gradient, we derive a process which leads to a curvature-corrected, noise-adaptive online gradient estimate.
The smoothness of our updates makes it more amenable to simple step size selection schemes, which we also base off of our estimates quantities.
We prove that our model-based procedure converges in the noisy quadratic setting. Though we do not see similar gains in deep learning tasks, we can match the performance of well-tuned optimizers and ultimately, this is an interesting step for constructing self-tuning optimizers.

\end{abstract}

\section{Introduction}\label{sec:intro}

Stochastic gradient-based optimization is plagued by the presence of numerous hyperparameters. 
While these can often be set to rule-of-thumb constants or manually-designed schedules, it is also common belief that a more information regarding the optimization landscape can help present alternative strategies such that manual tuning has less of an impact on the end result. 
For instance, the use of curvature information in the form of Hessian matrices or Fisher information can be used to de-sensitize or completely remove step size parameter~\citep{ypma1995historical,amari1998natural,martens2014new}, and the momentum coefficient can be set to reduce the local gradient variance~\citep{arnold2019reducing}.


\begin{wrapfigure}[15]{r}{0.45\linewidth}
\vspace{-1em}
    \centering
    \includegraphics[width=\linewidth]{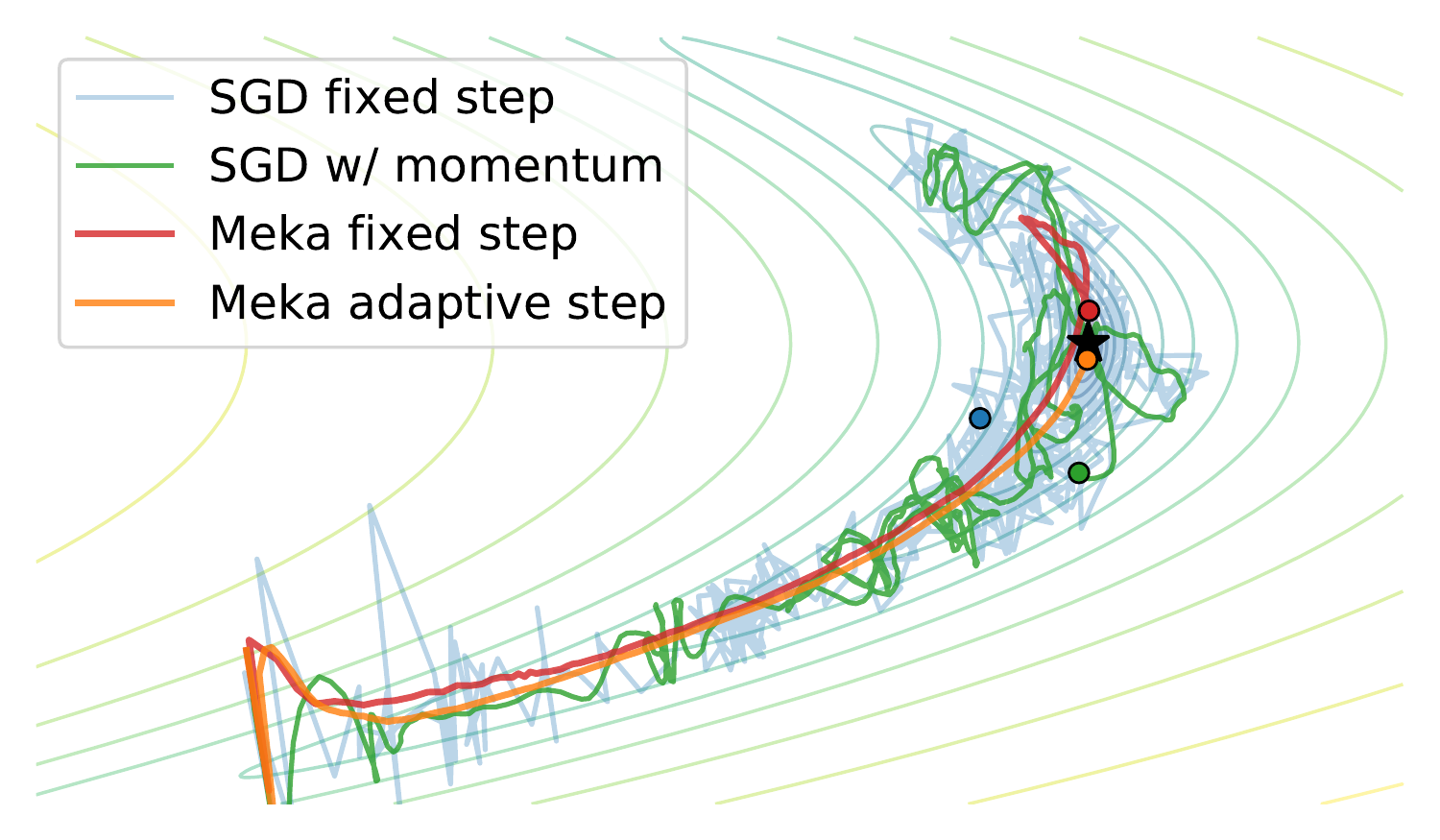}
    \caption{Stochastic gradient eventually goes into diffusion and does not converge. Our filtered gradients offer smooth convergence and complements adaptive step sizes.} 
    \label{fig:rosenbrock2d}
\end{wrapfigure}
Based on these intuitions, we investigate the use of efficient curvature and variance estimates during training to construct a \emph{self-tuning} optimization framework. 
Under a Bayesian paradigm, we treat the true gradient as the unobserved state of a dynamical system and seek to automatically infer the true gradient conditioned on the history of parameter updates and stochastic gradient observations.

Our method is enabled by evaluations of exact \emph{per-sample} gradients and Hessian-vector products. With recent improvements in automatic differentiation tooling~\citep[\eg ,][]{jax2018github,agarwal2019auto,dangel2020backpack}, this matches the asymptotic time cost of minibatch gradient and Hessian-vector product evaluations.

While our framework contains the good properties of both curvature-based updates and variance reduction---which are attested in toy and synthetic scenarios---we do not observe significant improvements empirically in optimizing deep neural networks. Notably, our approach can be viewed as an explicit form of the implicit gradient transport of \citet{arnold2019reducing}, yet it does not achieve the same acceleration empirically observed in practice. While we do not fully understand this behavior, we analyze the estimated quantites along the training trajectory and hypothesize that our method has a higher tendency of going down high-variance high-curvature regions whereas standard stochastic gradient descent is repelled from such regions due to gradient variance. This potentially serves as a downside of our method in the deep learning setting. Regardless, the use of efficient variance estimation and interpretation of gradient estimation as Bayesian filtering are useful constructs in the development of self-tuning stochastic optimization.


\section{Bayesian Filtering for Stochastic Gradients}\label{sec:gradfilter}

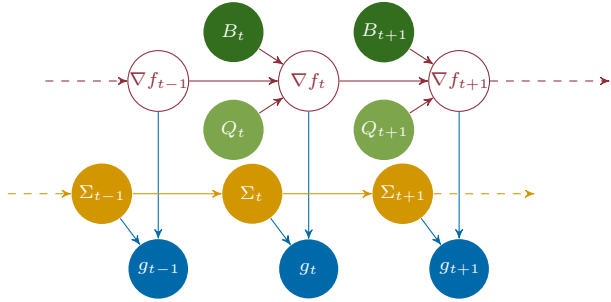
\begin{figure}
\centering
\begin{tikzpicture}\scriptsize
    \begin{scope}[draw=TUred,text=TUred]
    \node[var,draw=TUred] (gm) at (0,0) {$\nabla f_{t-1}$};
    \node[var,draw=TUred] (g0) at (2,0) {$\nabla f_{t}$} edge[<-] (gm);
    \node[var,draw=TUred] (gp) at (4,0) {$\nabla f_{t+1}$} edge[<-] (g0);

    \draw (-1.5,0) edge[->,dashed] (gm);
    \draw (gp) edge[->,dashed] (6,0);
    \end{scope}

    \node[obs,TUsecondary2,text=white] (ym) at (0,-2.5) {$g_{t-1}$} edge[<-,draw=TUsecondary2] (gm);
    \node[obs,TUsecondary2,text=white] (y0) at (2,-2.5) {$g_{t}$} edge[<-,draw=TUsecondary2] (g0);
    \node[obs,TUsecondary2,text=white] (yp) at (4,-2.5) {$g_{t+1}$} edge[<-,draw=TUsecondary2] (gp);


    \node[obs,TUsecondary11,text=white] (sm) at (-0.75,-1.5) {$\Sigma_{t-1}$} edge[->,TUsecondary2] (ym);
    \node[obs,TUsecondary11,text=white] (s0) at (1.25,-1.5) {$\Sigma_{t}$} edge[->,TUsecondary2] (y0);
    \node[obs,TUsecondary11,text=white] (sp) at (3.25,-1.5) {$\Sigma_{t+1}$} edge[->,TUsecondary2] (yp);

    \draw (-2,-1.5) edge[->,TUsecondary11,dashed] (sm);
    \draw (sm) edge[->,TUsecondary11] (s0);
    \draw (s0) edge[->,TUsecondary11] (sp);
    \draw (sp) edge[->,TUsecondary11,dashed] (5,-1.5);

    \node[obs,TUsecondary6,text=white] (Bm) at (1,0.65) {$B_{t}$} edge[->,TUred] (g0); 
    \node[obs,TUsecondary6,text=white] (B0) at (3,0.65) {$B_{t+1}$} edge[->,TUred] (gp);


    \node[obs,TUsecondary5,text=white] (Bm) at (1,-0.65) {$Q_{t}$} edge[->,TUred] (g0); 
    \node[obs,TUsecondary5,text=white] (B0) at (3,-0.65) {$Q_{t+1}$} edge[->,TUred] (gp); 

\end{tikzpicture}
\caption{Graphical model of the hidden Markov dynamics model.
The main idea of our algorithm is that the dynamics parameters can be cheaply estimated on each minibatch, and smoothed across time using exact Kalman filter inference.
These dynamics parameters are the gradient variance $\Sigma$, the directional curvature $B\delta$ and its variance $Q$.
We stabilize $\Sigma$ with an exponential moving average, which is effectively another, more elementary form of Kalman filtering.}
\label{fig:graph}
\end{figure}

We consider stochastic optimization problems of the general form
\begin{equation}
    \arg \min_{\theta \in \R^d} f(\theta),\quad f(\theta) = \E_{\xi}\left[\tilde{f} (\theta, \xi)\right]
\end{equation}
where we only have access to samples $\xi$.
Stochastic gradient descent---the prototypical algorithm for this setting---iteratively updates $\theta_{t+1} = \theta_t - \alpha_t g_t$, where
\begin{equation}
    \label{eq:minibatch_stochastic_gradient}
    g_t = \frac{1}{n} \sum_{i=1}^n \nabla_\theta \tilde{f}(\theta_t, \xi_t^{(i)}), \quad \xi_t^{(1)}, \dotsc, \xi_t^{(n)} \overset{\text{iid}}{\sim} p(\xi),
\end{equation}
and $\alpha_t$ is a scalar step size.
We may use notational shorthands like $f_t = f(\theta_t)$, $\nabla f_t = \nabla f(\theta_t)$.

SGD is hampered by the effects of gradient noise.
It famously needs a decreasing step size schedule to converge; used with a constant step size, it goes into diffusion in a region around the optimum~\citep[see, \eg ,][]{bottou2018optimization}.
Gradient noise also makes stochastic optimization algorithms difficult to tune.
In particular, unreliable directions are not amenable to step size adaptation.

To stabilize update directions, we build a framework for estimating the true gradient $\nabla f$ based on Kalman filtering. This can also be viewed as a variance reduction method, but does not require the typical finite-sum structure assumption of \eg \citet{schmidt2017minimizing,johnson2013accelerating}.




\subsection{Dynamical System Model}

We treat the true gradient $\nabla f_t$ as the latent state of a dynamical system. This dynamical system is comprised of an observation model $p(g_t \;|\; f_t )$ and a dynamics model $p(\nabla f_t \;|\; \nabla f_{t-1}, \delta_{t-1})$ where $\delta_{t-1} = \theta_{t} - \theta_{t-1}$ is the update direction.
We will later choose $\delta_t$ to be depend on our variance-reduced gradient estimates, but the gradient inference framework itself is agnostic to the choice of $\delta_t$.

\paragraph{The observation model}
$p(g_t \;|\; \nabla f_t)$ describes how the gradient observations relate to the state of the dynamical system.
In our case, it is relatively straight-forward, since $g_t$ is simply an unbiased stochastic estimate of $\nabla f_t$, but the exact distribution remains to be specified.
We make the assumption that $g_t$ follows a Gaussian distribution,
\begin{equation}
    \label{eq:observation_model}
    g_t \;|\; \nabla f_t \sim \mathcal{N}(\nabla f_t, \Sigma_t),
\end{equation}
with covariance $\Sigma_t$.
Since $g_t$ is the mean of iid terms (Eq.~\ref{eq:minibatch_stochastic_gradient}), this assumption is supported by the central limit theorem when sufficiently large batch sizes are used. 

\paragraph{The dynamics model}
$p(\nabla f_t \;|\; \nabla f_{t-1})$ describes how the gradient evolves between iterations.
We base our dynamics model on a first order Taylor expansion of the gradient function \textbf{centered at $\mathit{\boldsymbol\theta_{\mathbf{t}}}$}, $\nabla f(\theta_{t-1}) \approx \nabla f(\theta_t) - \nabla^2 f(\theta_t) \delta_{t-1}$.
We propose to approximate the gradient dynamics by computing a stochastic estimate of the Hessian-vector product, $B_t\delta_{t-1}$, where $\E [B_t] = \nabla^2 f(\theta_t)$.
Again, we make a Gaussian noise assumption.
This implies the dynamics model
\begin{equation}
    \label{eq:dynamics_model}
    \nabla f_t \;|\; \nabla f_{t-1} \sim \mathcal{N}(\nabla f_{t-1} + B_t \delta_{t-1}, Q_{t}).
\end{equation}
where $Q_t$ is the covariance of $B_t\delta_{t-1}$, taking into account the stochasticity in $B_t$. 

A key insight is that the parameters $B_t\delta_{t-1}, Q_t, \Sigma_t$ of the model can all be ``observed'' directly using automatic differentiation of the loss on each minibatch of samples.
We use the Hessian at $\theta_t$ so that the Hessian-vector product can be simultaneously computed with $g_t$ with just one extra call to automatic differentiation (or ``backward pass'') in each iteration (note this does not require constructing the full matrix $B_t$).
The variances $Q_t$ and $\Sigma_t$ can also be empirically estimated with some memory overhead by using auto-vectorized automatic differentiation routines.
We discuss implementation details later in Section~\ref{sec:practical_implementation}.

\subsection{Filtering Framework for Gradient Inference}
As Equations~\eqref{eq:observation_model} and~\eqref{eq:dynamics_model} define a linear-Gaussian dynamical system, exact inference on the true gradient conditioned on the history of gradient observations $p(\nabla f_t | g_{1:t}, \delta_{1:{t-1}})$ takes the form of the well-known Kalman filtering equations~\citep{kalman1960} \citep[review in][]{sarkka2013bayesian}: We define parameters $m_t^-$, $m_t$, $P_t^-$ and $P_t$ such that
\begin{equation}\label{eq:grad_posterior}
\begin{split}
\nabla f_t \mid g_{1:t-1}, \delta_{1:{t-1}} &\sim \N ( m_t^-,\; P_t^- ) \\ 
\nabla f_t \mid g_{1:t}, \delta_{1:{t-1}} &\sim \N ( m_t,\; P_t ).
\end{split}
\end{equation}
Starting from a prior belief $\nabla f_0 \sim \mathcal{N}(m_0, P_0)$, these parameters are updated iteratively:
\begin{align}
    m_t^- &= m_{t-1} + B_t\delta_{t-1}, & P_t^- &= P_{t-1} + Q_{t-1} \label{eq:kalman_predict} \\
    K_t &= P_t^- ( P_t^- + \Sigma_t )^{-1} \label{eq:kalman_gain} &  & \\
    m_t &= (I - K_t)m_t^- + K_t g_t, & P_t &= (I - K_t)P_t^-(I - K_t)^T + K_t \Sigma_t K_t^T \label{eq:kalman_correct}
\end{align}

Equation \eqref{eq:kalman_predict} is referred to as the \emph{prediction} step as it computes mean and covariance of the predictive distribution $p(\nabla f_t \vert g_{1:t-1})$.
In our setting, it predicts the gradient $\nabla f_t$ based on our estimate of the previous gradient ($m_{t-1}$) and the Hessian-vector product approximating the change in gradient from the step $\theta_t = \theta_{t-1} + \delta_{t-1}$.
Equation~\eqref{eq:kalman_correct} is the \emph{correction} step.
Here, the local stochastic gradient evaluation $g_t$ is used to correct the prediction.
Importantly, the \emph{Kalman gain}~\eqref{eq:kalman_gain} determines the blend between the prediction and the observations according to the uncertainty in each.

The resulting algorithm gives an online estimation of the true gradients as the parameters $\theta_t$ are updated. 
We refer to this framework as \Meka{}, loosely based on \emph{model-based Kalman-adjusted gradient estimation}.
During optimization, we may use the posterior mean $m_t$ as a variance-reduced gradient estimator and take steps in the direction of $\delta_t = - \alpha_t m_t$.

We note two key insights enabling \Meka{}: First, all parameters of the filter are not set \emph{ad hoc}, but are directly evaluated or estimated using automatic differentiation.
Secondly, the dynamics model makes explicit use of the Hessian to predict gradients. This is a first-order update. In contrast to second-order methods, like quasi-Newton methods, \Meka{} does not try to estimate the Hessian from gradients, but instead leverages a (noisy) projection with the actual Hessian to improve gradient estimates. This is both cheaper and more robust than second-order methods, because it does not involve solving a linear system.


\subsection{\adam-style Update Directions}

While $\Meka$ produces variance-reduced gradient estimates, it does not help with ill-conditioned optimization problems, a case where full batch gradient descent can perform poorly. To alleviate this, we may instead take update directions motivated by the \adagrad~\citep{duchi2011adaptive} line of optimizers. We follow \adam~\citep{kingma2014adam} which proposes dividing the first moment of the gradient element-wise by the square root of the second moment, to arrive at
\begin{equation}
    \delta_t = \alpha_t \frac{m_t}{\sqrt{m_t + \text{diag}(P_t)} + \varepsilon}.
\end{equation}
where $\varepsilon$ is taken for numerical stability and simply set to $10^{-8}$.
Whereas $\adam$ makes use of two exponential moving averages to estimate the first and second moments of $g_t$, we have estimates automatically inferred through the filtering framework. We refer to this variant as \adameka.

\section{Uncertainty-informed Step Size Selection}

We can adopt a similar Bayesian filtering framework for probabilistic step size adaptation. Our step size adaptation will be a simple enhancement to the quadratic rule, but takes into account uncertainty in the stochastic regime and is much more robust to stochastic observations. The standard quadratic rule if the objective $f$ can be computed exactly is $\alpha_{\textnormal{quadratic}} := \frac{-\delta_t^T \nabla f_t}{\delta_t^T \nabla^2 f_{t-1} \delta_t}$,
which is based on minimizing a local quadratic approximation $f(\theta_{t} + \alpha_t \delta_t) - f_t \approx \alpha\delta_t^T \nabla f_t + \frac{\alpha^2}{2} \delta_t^T \nabla^2 f_{t-1} \delta_t$.

However, since we only have access to stochastic estimates of $\nabla f$ and $\nabla^2 f$, na\"ively taking this step size with high variance samples results in unpredictable behavior and can cause divergence during optimization. 
To compensate for the stochasticity and inaccuracy of a quadratic approximation, adaptive step size approaches often include a ``damping'' term~(\eg \citet{martens2010hfopt})---where a constant is added to the denominator---and an additional scaling factor on $\alpha_t$, both of which aim to avoid large steps but introduces more hyperparameters.

As an alternative, we propose a scheme that uses the variance of the estimates to adapt the step size, only taking steps into regions where we are confident about minimizing the objective function. Once again leveraging the availability of $Q_t$ and $\Sigma_t$, our approach allows automatic trade-off between minimizing a local quadratic approximation and the uncertainty over large step sizes, foregoing manual tuning methods such as damping.

We adopt a similar linear-Gaussian dynamics model for tracking the true objective $f_t$, with the same assumptions as in Section~\ref{sec:gradfilter}. Due to its similarity with Section~\ref{sec:gradfilter}, we delegate the derivations to Appendix~\ref{app:func}. We again define the posterior distribution,
\begin{equation}
    f_t \mid y_{1:t}, \delta_{1:t-1} \sim \mathcal{N}( u_t, s_t ).
\end{equation}
where $u_t$ and $s_t$ are inferred using the Kalman update equations. Finally, setting $f_{t+1} = f(\theta_t + \alpha_t\delta_t)$ for some direction $\delta_t$, we have a predictive model of the change in function value as
\begin{equation}\label{eq:df_dist}
f_{t+1} - f_t \mid y_{1:t}, g_{1:t}, \delta_{1:t} \sim \mathcal{N}\bigg( \alpha_t \delta_t^Tm_t + \frac{\alpha_t^2}{2} \delta_t^TB_t\delta_t, 2s_t + \alpha_t^2 \delta_t^T P_t\delta_t + \frac{\alpha_t^4}{4} \delta_t^T Q_t\delta_t \bigg)
\end{equation}
Contrasting this with the simple quadratic approximation, the main difference is now we take into account the uncertainty in $f_t$, $\nabla f_t$, and $\nabla^2 f_t$.
Each term makes different contributions to the variance as $\alpha_t$ increases, corresponding to different trade-offs between staying near where we are more certain about the function value and exploring regions we believe have a lower function value.
Explicitly specifying this trade-off gives an \emph{acquisition function}.
These decision rules are typically used in the context of Bayesian optimization~\citep{boreview}, but we adopt their use for step size selection.

\subsection{Acquisition Functions for Step Size Selection}

Computing the optimal step size in the context of a long but finite sequence of optimization steps is intractable in general, but many reasonable heuristics have been developed.
These heuristics usually balance immediate progress against information gathering likely to be useful for later steps.

One natural and hyperparameter-free heuristic is maximizing the \emph{probability of improvement} (PI)~\citep{kushner1964new},
\begin{equation}\label{eq:pi}
    \alpha_\textnormal{PI} := \argmax_\alpha \mathbb{P}\left( f_{t+1} - f_t \leq 0 \mid y_{1:t}, g_{1:t} \right)
\end{equation}
which is simply the cumulative distribution function of \eqref{eq:df_dist} evaluated at zero. 

\begin{figure}
    \centering
    \begin{subfigure}[b]{0.35\linewidth}
        \includegraphics[width=\linewidth]{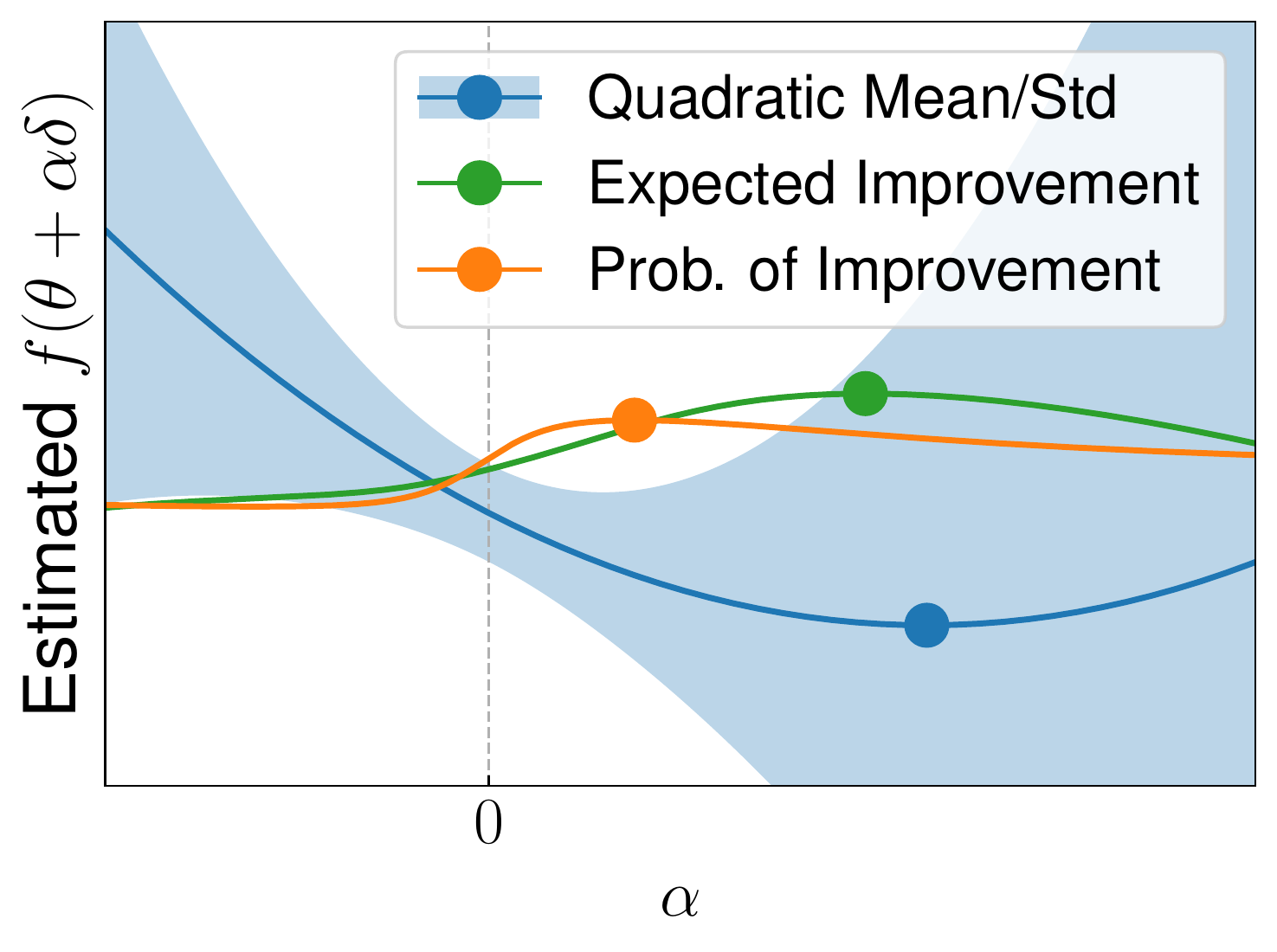}
        \caption{Positive curvature}
        \label{fig:negcurv_nolambda}
    \end{subfigure}
    \begin{subfigure}[b]{0.35\linewidth}
        \includegraphics[width=\linewidth]{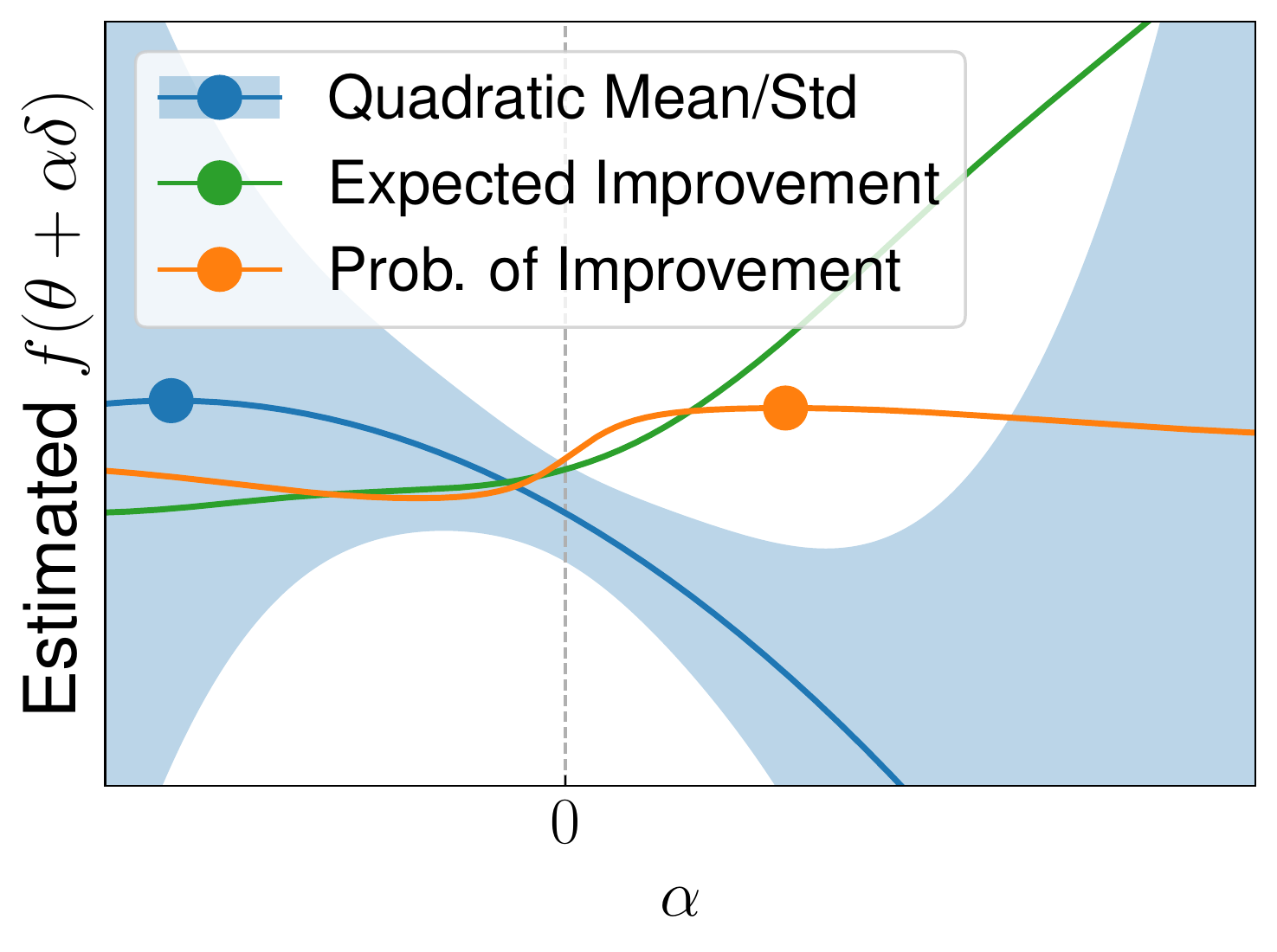}
        \caption{Negative curvature}
        \label{fig:negcurv_lambda}
    \end{subfigure}
    \caption{Illustration of different acquisition functions for selecting a step size $\alpha$, based on the mean and variance of our local quadratic estimate of the loss surface.}
    \label{fig:acquisition_fns}
\end{figure}

Figure~\ref{fig:acquisition_fns} visualizes the different step sizes chosen by maximizing different acquisition functions.
The heuristic of choosing the minimum of the quadratic approximation can be a poor decision when the uncertainty rises quickly. The optimum for PI interpolates between zero and the quadratic minimum in such a way that avoids regions of high uncertainty.
Expected improvement~\citep{jones1998efficient} is another popular acquisition function; however, in tests we found it to not be as robust as PI and often results in step sizes that require additional scaling.

Maximizing probability of improvement is equivalent to the following optimization problem
\begin{equation}\label{eq:pi_loss}
    \alpha_{\text{PI}} = \argmin_\alpha \frac{-\alpha \delta_t^T m_t + \frac{\alpha^2}{2} \delta_t^TB_t\delta_t}{\sqrt{ 2s_t + \alpha^2\delta_t^TP_t\delta_t + \frac{\alpha^4}{4} \delta_t^TQ_t\delta_t }}.
\end{equation}
We numerically solve for $\alpha_{\text{PI}}$ using Newton's method, which itself is only a small overhead since we only optimize in one variable with fixed constants: no further evaluations of $f$ are required.
We also note that there is exactly one optimum for $\alpha \in \R^+$.
For optimization problems where negative curvature is a significant concern, we include a third-order correction term that  ensures finite and positive step sizes~(details in Appendix~\ref{app:negcurv_lambda}).

\section{A Practical Implementation}
\label{sec:practical_implementation}

While the above derivations have principled motivations and are free of hyperparameters, a practical implementation of \Meka{} is not entirely straightforward. Below we discuss some technical aspects, simplifications and design choices that increase stability in practice, as well as recent software advances that simplify the computation of quantities of interest. 

\paragraph{Computing Per-Example Quantities for Estimating Variance}
Recent extensions for automatic differentiation in the machine learning software stack~\citep{jax2018github,agarwal2019auto,dangel2020backpack} implement an automatic vectorization \texttt{map} function.
Vectorizing over minibatch elements allows efficient computation of gradients and Hessian-vector products of neural network parameters with respect to each data sample independently. 
These advances allow efficient computation of the empirical variances of gradients and Hessian-vector products, and enable our filtering-based approach to gradient estimation. 


\paragraph{Stabilizing Filter Estimates}

Instead of working with the full covariance matrices $\Sigma_t$ and $Q_t$, we approximate them as scalar objects $\sigma_tI$ and $q_tI$, with $\sigma_t,q_t\in\mathbb{R}_+$ by averaging over all dimensions. We have experimented with diagonal matrices, but found that the scalar form increases stability, generally performing better on our benchmarks. 
Furthermore, we use an exponential moving average for smoothing the estimated gradient variance $\sigma_t$ as well as the adaptive step sizes $\alpha_t$. 
The coefficients of these exponential moving average are kept at $0.999$ in our experiments and seem to be quite insensitive, with values in $\{0.9, 0.99, 0.999\}$ all performing near identically (see Appendix~\ref{app:beta_test}). 





\section{Related Work}

Designing algorithms that can self-tune its own parameters is a central theme in optimization~\citep{eiben2011parameter,yang2013framework}; we focus on the stochastic setting, building on and merging ideas from several research directions.
The Bayesian filtering framework itself has previously been applied to stochastic optimization.
To the best of our knowledge, the idea goes back to \citet{bittner2004kalman} who used a filtering approach to devise an automatic stopping criterion for stochastic gradient methods.
\citet{Patel_2016} proposed filtering-based optimization methods for large-scale linear regression problems.
\citet{vuckovic2018kalman} and \citet{mahsereci2018probabilistic} used Kalman filters on general stochastic optimization problems with the goal of reducing the variance of gradient estimates.
In contrast to our work, none of these existing approaches leverage evaluations of Hessian-vector products to give curvature-informed dynamics for the gradient.

In terms of online variance reduction, \citet{gower2017tracking} have discussed the use of Hessian-vector products to correct the gradient estimate; however, they propose methods that approximate the Hessian whereas we compute exact Hessian-vector products by automatic differentiation.
\citet{arnold2019transporting} recently proposed an implicit gradient transport formula analogous to our dynamics model, but they require a rather strong assumption that the Hessian is the same for all samples and parameter values. In contrast, we focus on explicitly transporting via the full Hessian. This allows us to stay within the filtering framework and automatically infer the gain parameter, whereas the implicit formulation of \citet{arnold2019transporting} requires the use of a manually-tuned averaging schedule.

Step size selection under noisy observations is a difficult problem and has been tackled from multiple viewpoints. Methods include meta-learning approaches~\citep{almeida1999parameter,schraudolph1999local,plagianakos2001learning,yu2006fast,baydin2017online} or by assuming the interpolation regime~\citep{vaswani2019painless,berrada2019training}. \citet{rolinek2018l4} proposed extending a linear approximation to adapt step sizes but introduces multiple hyperparameters to adjust for the presence of noise, whereas we extend a quadratic approximation and automatically infer parameters based on noise estimates. Taking into account observation noise, \citet{mahsereci2017probabilistic} proposed a probabilistic line search that is done by fitting a Gaussian process to the optimization landscape. However, inference in Gaussian processes is more costly than our filtering approach.

\section{Convergence in the Noisy Quadratic Setting}


%
\begin{wrapfigure}[17]{r}{0.4\linewidth}
\vspace{-1em}
    \centering
    \includegraphics[width=\linewidth]{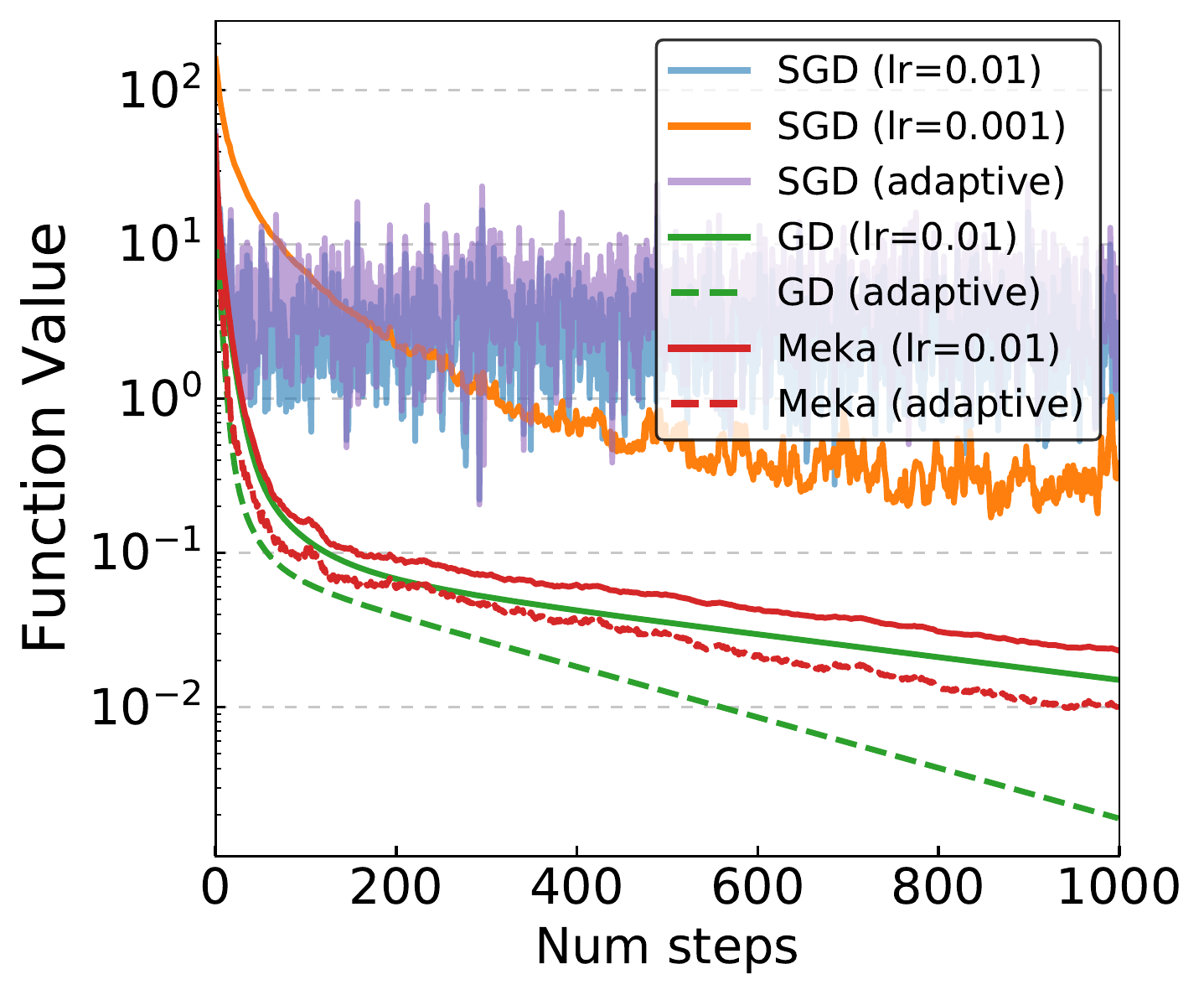}
    \caption{Filtered gradients converge with a fixed step size in the noisy quadratic regime, whereas \sgd{} results in diffusion for the same step size.}
    \label{fig:noisy_quadratic}
\end{wrapfigure}
As a motivating example, consider a simple toy problem, where
\begin{equation}
    \label{eq:quadratic_toy_problem}
    f(\theta, \xi) = \frac{1}{2} (\theta - \xi)^T H (\theta - \xi),
\end{equation}
i.e., a mixture of quadratic functions with identical Hessian but varying location determined by the ``data'' $\xi$.
The full gradient is $\nabla f(\theta) = H(\theta - \E[\xi])$ and per-example gradients evaluate to $\nabla f(\theta, \xi) = H(\theta - \xi) = \nabla f(\theta) - H(\xi - \E[\xi]$).
Hence, we have additive gradient noise with covariance $\Sigma = H\mathbf{Cov}[\xi] H^T$ independent of $\theta$.
Moreover, since the Hessian $\nabla^2 f(\theta, \xi) = H$ is independent of $\xi$, we have 
that $B_t\delta_{t-1}\equiv \nabla f_t - \nabla f_{t-1}$.
The covariance $Q_t$ is zero and the filter equations simplify to
\begin{equation}\label{eq:quadratic_toy_problem_filter_equations}
\begin{split}
    & K_t = P_{t-1} (P_{t-1}+ \Sigma)^{-1},\\ 
    & m_t = (I - K_t) (m_{t-1} + B_t\delta_{t-1}) + K_t g_t,\\ 
    & P_t = (I - K_t) P_{t-1},
\end{split}
\end{equation}
initialized with $m_0=g_0$, $P_0=\Sigma$.
The filter covariance $P_t$ contracts in every step and in fact, shrinks at a rate of $O(1/t)$, meaning that the filter will narrow in on the exact gradient.
We show that this enables $O(1/t)$ convergence with a \emph{constant} step size.
\begin{proposition}
\label{proposition:quadratic_toy_problem_convergence}
Assume a problem of the form \eqref{eq:quadratic_toy_problem} with  $\mu I \preceq H \preceq LI$.
If we update $\theta_{t+1} = \theta_t - \alpha m_t$ with $\alpha\leq 1/L$ and $m_t$ obtained via Eq.~\eqref{eq:quadratic_toy_problem_filter_equations}, then $\E[f(\theta_t) - f_\ast] \in O\left(1/t \right)$.
\end{proposition}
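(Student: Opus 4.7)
The plan is to exploit the two simplifying features of this setting: the Hessian is deterministic and constant (so the dynamics model is exact and $Q_t \equiv 0$), and the gradient noise covariance $\Sigma$ is independent of $\theta$. First, I would solve the filter covariance recursion in closed form. With $P_0 = \Sigma$ and $P_t = (I - K_t)P_{t-1}$, $K_t = P_{t-1}(P_{t-1}+\Sigma)^{-1}$, a quick induction yields $K_t = \tfrac{1}{t+1}I$ and $P_t = \tfrac{1}{t+1}\Sigma$; in particular $\operatorname{tr}(P_t) = \operatorname{tr}(\Sigma)/(t+1)$.

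Next I would track the error $e_t := m_t - \nabla f_t$. Because $B_t\delta_{t-1}$ equals $\nabla f_t - \nabla f_{t-1}$ exactly in this setting, subtracting $\nabla f_t$ from the filter update collapses to
\begin{equation*}
e_t = (I - K_t)\,e_{t-1} + K_t\,\eta_t, \qquad \eta_t := g_t - \nabla f_t,
\end{equation*}
with $\eta_t$ i.i.d., mean zero, covariance $\Sigma$, and independent of $\mathcal{F}_{t-1}$. Together with $e_0 = \eta_0$, the Joseph-form identity for the Kalman gain immediately gives $\E[e_t e_t^{\top}] = P_t = \Sigma/(t+1)$, so $\E\|e_t\|^2 \in O(1/t)$. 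This is the key probabilistic input.

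Then I would run the standard quadratic descent analysis. Writing $r_t = \theta_t - \theta_{\ast}$ and $A := (I-\alpha H)H$, which is symmetric PSD and commutes with $H$ when $\alpha \le 1/L$, expanding $2(f(\theta_{t+1})-f_{\ast}) = r_{t+1}^{\top}H r_{t+1}$ for $r_{t+1} = (I-\alpha H)r_t - \alpha e_t$ and taking expectations gives
\begin{equation*}
\E[f_{t+1}-f_{\ast}] \;\le\; (1-\alpha\mu)\,\E[f_t-f_{\ast}] \;-\; \alpha\,\E[r_t^{\top}A\,e_t] \;+\; \tfrac{\alpha^2}{2}\operatorname{tr}(H P_t),
\end{equation*}
where the contraction factor uses $(I-\alpha H)^2 \preceq (I-\alpha H)$ together with $H \succeq \mu I$. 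The last term is $O(1/t)$ by Step 1.

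The main obstacle is the cross term $\E[r_t^{\top}A\,e_t]$: $r_t$ and $e_t$ are not independent because both are built from the same noise history $\eta_0,\dots,\eta_{t-1}$, and conditionally $\E[e_t\mid \mathcal{F}_{t-1}] = (I-K_t)e_{t-1}\neq 0$. I would not attempt to cancel it exactly; instead I would bound it by the Cauchy--Schwarz inequality for the PSD bilinear form $A$, giving $|\E[r_t^{\top}A e_t]|\le \sqrt{\E[r_t^{\top}A r_t]\,\E[e_t^{\top}A e_t]} \le \sqrt{2\E[f_t-f_{\ast}]\cdot \operatorname{tr}(H\Sigma)/(t+1)}$. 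Applying Young's inequality $xy \le \mu x^2/2 + y^2/(2\mu)$ absorbs the $\sqrt{\E[f_t-f_{\ast}]}$ factor into half of the contraction budget and produces the clean recursion
\begin{equation*}
a_{t+1} \;\le\; \bigl(1-\tfrac{\alpha\mu}{2}\bigr)\,a_t \;+\; \frac{C}{t+1}, \qquad a_t := \E[f_t-f_{\ast}],
\end{equation*}
with $C$ a constant depending on $\alpha, \mu, L, \operatorname{tr}(H\Sigma)$. A routine induction shows any sequence satisfying such a recursion obeys $a_t \le A/t$ once $A \ge 4C/(\alpha\mu)$ and $t \ge 4/(\alpha\mu)$; the finitely many small-$t$ values are absorbed by enlarging $A$. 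This delivers the claimed $\E[f(\theta_t)-f_{\ast}] \in O(1/t)$.
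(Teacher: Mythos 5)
Your proof is correct, and it hits the same two structural milestones as the paper's: the closed form $P_t = \frac{1}{t+1}\Sigma$ (hence $\E[\Vert m_t - \nabla f_t\Vert^2] = \trace(\Sigma)/(t+1)$), and a one-step recursion $a_{t+1} \leq (1-\rho)a_t + O(1/t)$ solved by induction. The middle step, however, uses a genuinely different decomposition. The paper applies the $L$-smoothness descent lemma and then the algebraic identity $2\nabla f_t^T m_t - \Vert m_t\Vert^2 = \Vert \nabla f_t\Vert^2 - \Vert m_t - \nabla f_t\Vert^2$, which makes any cross term between the iterate error and the gradient-estimation error vanish identically and delivers the full contraction $(1-\alpha\mu)$ plus the additive term $\frac{\alpha}{2}\E[\Vert m_t-\nabla f_t\Vert^2]$; the identification of that term with $\trace(P_t)$ is then asserted from Kalman-filter optimality. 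You instead expand the quadratic exactly, confront the cross term $\E[r_t^T A e_t]$ head-on, and tame it with Cauchy--Schwarz and Young, paying a factor of two in the contraction rate ($1-\alpha\mu/2$ rather than $1-\alpha\mu$) but still landing at $O(1/t)$. What your route buys is rigor at the one point the paper glosses over: your explicit error recursion $e_t = (I-K_t)e_{t-1} + K_t\eta_t$ (valid because $B_t\delta_{t-1} = \nabla f_t - \nabla f_{t-1}$ holds exactly here and the noise is additive, hence independent of the iterate) actually proves $\E[e_t e_t^T] = P_t$ instead of citing it. What the paper's route buys is brevity, sharper constants, and an argument that extends beyond exact quadratics, since it only invokes smoothness and strong convexity rather than the explicit form of $f$. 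Your closing induction for the recursion is essentially the same device as the paper's Lemma~\ref{lemma:geometric_times_harmonic_sequence}, just applied directly to $a_t$ rather than factored out as a statement about geometric-times-harmonic sums.
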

Figure~\ref{fig:noisy_quadratic} shows experimental results for such a noisy quadratic problem of dimension $d=20$ with a randomly-generated Hessian (with condition number $> 1000$) and $\xi\sim\mathcal{N}(0, I)$.
Using \sgd{} with a high learning rate simply results in diffusion, and setting the learning rate smaller results in slow convergence. Gradient descent (GD) converges nicely with the high learning rate, and using adaptive steps sizes leads to a better convergence rate. The filtered gradients from \Meka{} converge almost as well as gradient descent, and adaptive step sizes provide an improvement. On the other hand, \sgd{} produces unreliable gradient directions and does not work well with adaptive step sizes. We note that the stochastic gradient has a full covariance matrix and does not match our modeling assumptions, as our model uses a diagonal covariance for efficiency. Even so, the training loss of \Meka{} follows that of gradient descent very closely after just a few iterations.

\section{Classification Experiments}

Next we test and diagnose our approach on classification benchmarks, MNIST and CIFAR-10. We use JAX's~\citep{jax2018github} vectorized map functionality for efficient per-example gradients and Hessian-vector products.
For MNIST, we test using a multi-layer perceptron (MLP); for CIFAR-10, a convolutional neural network (CNN) and a residual network (ResNet-32)~\citep{he2016deep,he2016identity}. 
One key distinction is we replace the batch normalization layers with group normalization~\citep{wu2018group} as batch-dependent transformations conflict with our assumption that the gradient samples are independent. We note that the empirical per-iteration cost of \Meka{} is $1.0$--$1.6 \times$ that of \sgd{} due to the computation of Hessian-vector products.
Full experiment details are provided in Appendix~\ref{app:experiment_details}. A detailed comparison to tuned baseline optimizers is presented in Appendix~\ref{app:comparison_with_all}.

\paragraph{Online Variance Reduction}

\begin{wrapfigure}[15]{r}{0.4\linewidth}
\vspace{-1em}
\centering
    \includegraphics[width=\linewidth]{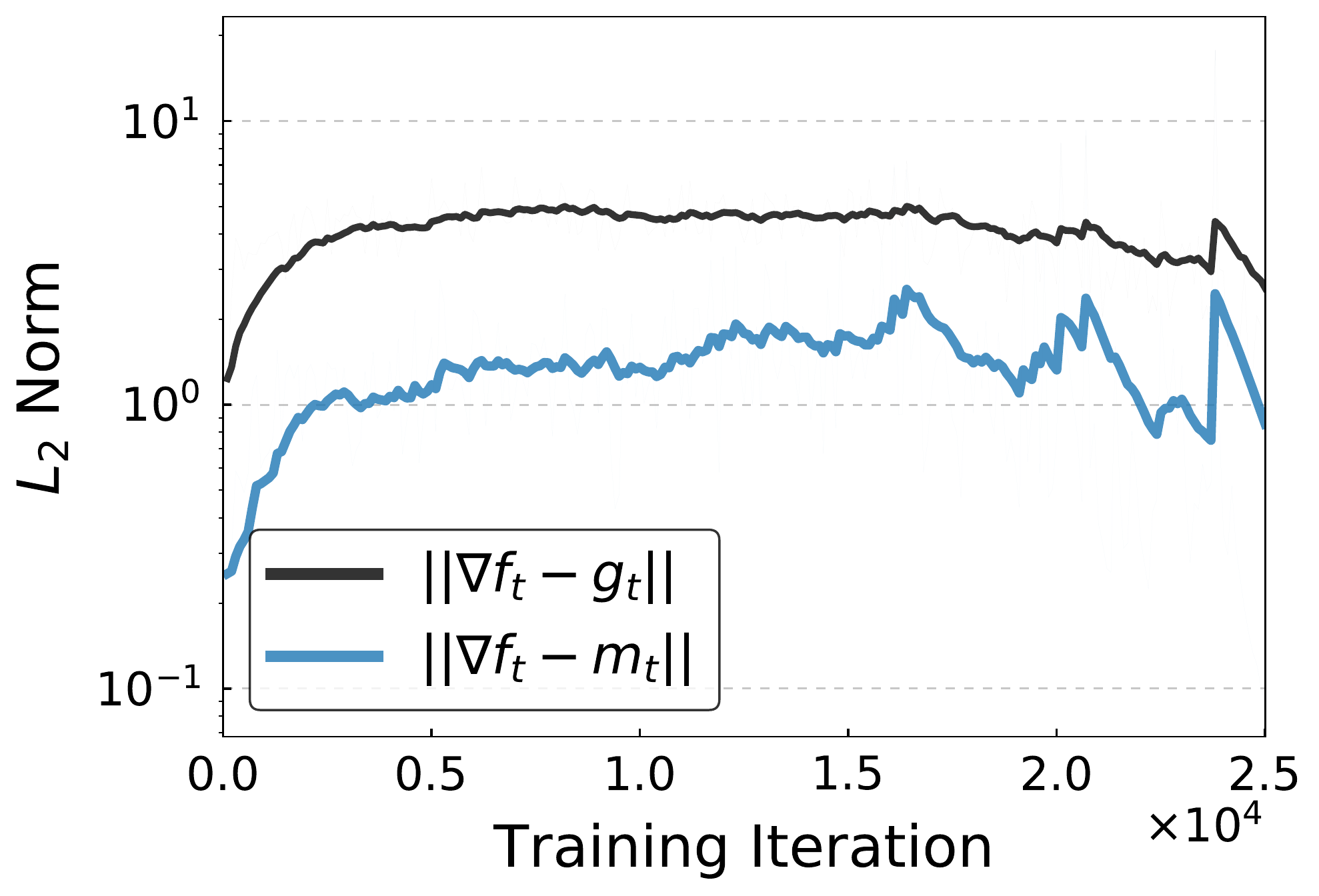}
    \caption{\Meka{}'s estimated gradients are closer to the true full-batch gradient in $L_2$ norm than stochastically observed gradients by a factor of around $5$.}
    \label{fig:l2_norm}
\end{wrapfigure}
We test whether the filtering procedure is correctly aligning the gradient estimate with the true gradient. For this, we use CIFAR-10 with a CNN and no data augmentation, so that the true full-batch gradient over the entire dataset can be computed. 
Figure~\ref{fig:l2_norm} shows the $L_2$ norm difference between the gradient estimators and the full-batch gradient $\nabla f_t$. \Meka{}'s estimated gradients are closer to the true around by around a factor of 5 compared to the minibatch gradient sample.



\begin{figure}
    \centering
    \includegraphics[width=0.82\linewidth]{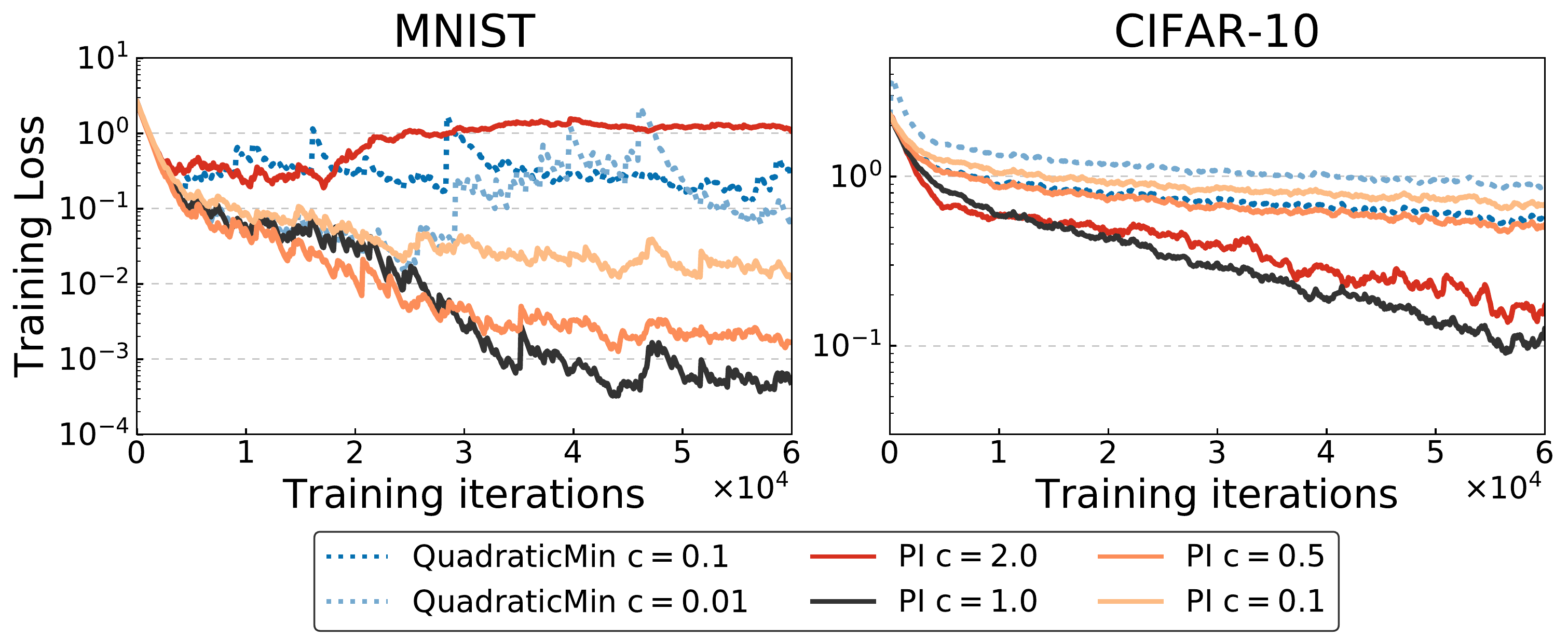}
    \caption{Adaptive step sizes based on probability of improvement work best without any additional scaling factor $c$ for modifying the update rule: $\theta_{t+1} = \theta_t + c \alpha_t \delta_t$. }
    \label{fig:adaptive_comparison}
\end{figure}

\paragraph{Adaptive Step Sizes are Appropriately Scaled}
\label{sec:adaptive_scaling}

Without uncertainty quantification, the quadratic minimum step size scheme tends to result in step sizes too large.
As such, one may include a scaling factor such that the update is modified as $\theta_{t+1} = \theta_t - c \alpha_t \delta_t$. In contrast, we find that the adaptive step sizes based on probability of improvement (PI) are already correctly scaled in the sense that a $c$ different from $1.0$ will generally result in worse performance.
Figure~\ref{fig:adaptive_comparison} shows a comparison of different values for $c$ for the quadratic and PI~\eqref{eq:pi} adaptive schemes.
We plot expected improvement in Appendix~\ref{app:adaptive_comparison_full}, which performs poorly and requires non-unit scaling factors.


\subsection{Adaptive Step Sizes Dives into High-curvature, High-variance Regions}
\label{app:mekadive}

\begin{figure}
    \centering
    \includegraphics[width=\linewidth]{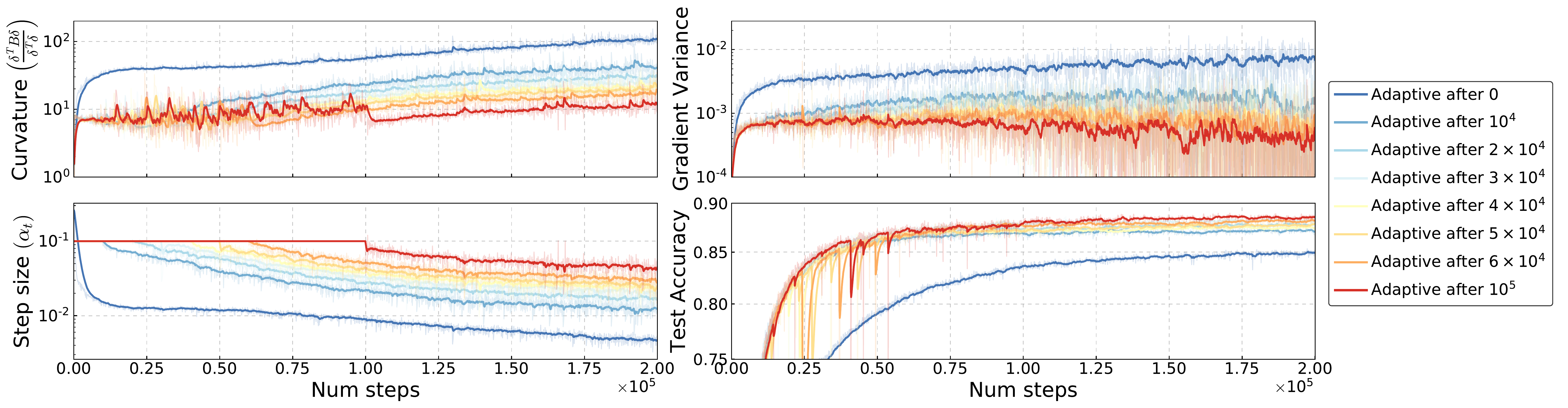}
    \caption{The performance of \Meka{} with adaptive step sizes on ResNet-32 can be explained by quantities captured during optimization. \Meka{} reaches high-curvature high-variance local minima, as soon as adaptive step sizes are used.}
    \label{fig:adaptdelay}
\end{figure}

A core aspect of our filtering approach is the ability to estimate quantities of interest during optimization. We now use these to help understand the loss landscape of ResNet-32 on CIFAR-10. 
We find that a cause for slow convergence of \Meka{} with adaptive step sizes is due to an abundance of minima that are usually too high variance for standard \sgd.

Figure~\ref{fig:adaptdelay} shows estimates of the normalized curvature along the descent direction $\smash{\frac{\delta^TB_t\delta}{\delta^T\delta}}$ as well as the per-sample gradient variance, averaged over parameters.
To understand the loss landscape along the trajectory of optimization, we use multiple runs of \Meka{} with the same initialization. Each run takes a different fixed number of constant-size steps before switching to the adaptive step size scheme. 

It is clear that immediately after switching to adaptive step sizes, \Meka{} falls into an increasingly high curvature region and remains there. The gradient variance also remains high. 
As our optimization procedure can handle relatively high variance and curvature, it proceeds to optimize within this sharp but potentally non-local minimum.
On the other hand, it may be an advantage of fixed-step-size SGD that it skips over both high-variance and high-curvature minima. 

This failing of adaptive step sizes during the initial phase of training may be related to the ``short horizon bias''~\citep{wu2018understanding} of our one-step-ahead acquisition function. If so, compute budget can be used to approximate multi-step-ahead gains to help reduce this bias. Additionally, the ability to optimize within high-curvature high-variance regions could potentially be an advantage on problems with fewer local minima, yet this may not be the case for deep learning.

\section{Conclusion}
We introduced an online gradient estimation framework for stochastic gradient-based optimization, which leverages Hessian-vector products and variance estimates to perform automatic online gradient estimation and step size selection. 
The result is a stochastic optimization algorithm that can self-tune many important parameters such as momentum and learning rate schedules, in an online fashion without checkpointing or expensive outer-loop optimization.

While the required additional observables can be computed efficiently with recent advances in automatic differentiation tooling, they are of course not free, increasing computational cost and memory usage compared to \sgd{}. What one gains in return is automation, so that it suffices to run the algorithm just once, without tedious tuning.
Given the amount of human effort and computational resources currently invested into hyperparameter tuning, we believe our contributions are valuable steps towards fully-automated gradient-based optimization.


{
\bibliographystyle{plainnat}
\bibliography{filtering_sgd}
}

\clearpage

\appendix

\section{The Full \Meka{} Algorithm with Adaptive Step Sizes}
\label{app:psuedocode}

\label{alg:meka}
\begin{algorithm}
  \caption{\Meka{} with adaptive step sizes based on maximizing the probability of improvement.}
  \label{alg:meka_opt}
\begin{algorithmic}
    \STATE {\bfseries Hyperparameters:} decay rates $\beta_r$=0.999, $\beta_\Sigma$=0.999, $\beta_\alpha$=0.999
    \STATE $m_0, P_0 \gets \vec{0}, 10^4$ 
    \COMMENT{large variance initialization ensures first Kalman gain is one}
    \STATE $u_0, s_0 \gets 0, 10^4$ 
    \STATE $\delta_0 = \vec{0}$
    \STATE $t \gets 0$
    \REPEAT
    \STATE $t \gets t + 1$
    \STATE $f_t^{(i)}, \nabla f_t^{(i)}, \nabla^2 f_t^{(i)}\delta_{t-1} \gets \text{VectorizedMap}(f, \{x_i\}_{i=1}^M; \theta_{t-1})$ 
    \COMMENT{compute per-example quantities}
    \STATE $y_t, r_t\; \gets \textnormal{MeanVarEMA}(\{ f_t^{(i)} \}; \beta_r)$ 
    \COMMENT{exponential moving average (EMA) on the variances}
    \STATE $g_t, \Sigma_t \gets \textnormal{MeanVarEMA}(\{ \nabla f_t^{(i)} \}; \beta_\Sigma)$
    \STATE $b_t, Q_t \gets \textnormal{MeanVar}(\{ \nabla^2 f_t^{(i)}\delta_{t-1} \})$
    \STATE $u_t, s_t \gets \textnormal{FilterUpdate}(u_{t-1}, s_{t-1}; m_{t-1}, P_{t-1}, y_t, r_t, b_t, Q_t)$
    \COMMENT{filter update equations}
    \STATE $m_t, P_t \gets \textnormal{FilterUpdate}(m_{t-1}, P_{t-1}; g_t, \Sigma_t, b_t, Q_t)$
    \COMMENT{filter update equations}
    \STATE $\alpha_t \gets \argmin_\alpha \eqref{eq:pi_loss}$  
    \COMMENT{with an EMA (decay rate $\beta_\alpha$) on the coefficients}
    \STATE $\delta_t \gets \alpha_t m_t$
    \STATE $\theta_t \gets \theta_{t-1} - \delta_t$
    \UNTIL{convergence}
\end{algorithmic}
\end{algorithm}

\section{The Function Value Dynamics Model}\label{app:func}

We discuss inferring the function value $f_t$, taking into account uncertainty due to changes in function value and observation noise during optimization.
The gradient dynamics~\eqref{eq:grad_posterior} imply the following dynamics model for the function value itself:
\begin{equation}
\begin{split}
    f_t \mid f_{t-1} &\sim \mathcal{N}(f_{t-1} + m_{t-1}^T\delta_{t-1} + \frac{1}{2}\delta_{t-1}^TB_t\delta_{t-1}, \\
    &\quad\quad \lambda_t + \delta_{t-1}^TP_{t-1}\delta_{t-1} + \frac{1}{4}\delta_{t-1}^TQ_t\delta_{t-1}) \\
    y_t \mid f_t &\sim \mathcal{N}(f_t, r_t) \\
\end{split}
\end{equation}
where we again use a quadratic approximation using Taylor expansion. Instead of the intractable $\nabla f$ and $\nabla^2 f$, we use the estimates from Section~\ref{sec:gradfilter}. The observations $y_t$ and $r_t$ are the empirical mean and variance of $f_t$ from a minibatch. The variance terms in the dynamics model are due to the uncertainty associated with $m_{t-1}$ and $B_{t}\delta_{t-1}$.

Here we have included a scalar term $\lambda_t$, which acts as a correction to the local quadratic model. This acts similar to a damping component, except we can automatically infer an optimal $\lambda_t$ by maximizing the likelihood of $p(y_t \mid y_{1:{t-1}})$, with a closed form solution~(see Appendix~\ref{app:lambda_t}). While damping terms are usually difficult to set empirically~\citep{choi2019empirical}, we note that including our $\lambda_t$ term is essentially free, and it automatically decays after optimization stabilizes.

\subsection{Adaptively Correcting the Dynamics Model}
\label{app:lambda_t}

We construct a dynamics model of the function value as follows (repeated for convenience):
\begin{equation}
\begin{split}
    f_t \mid f_{t-1} &\sim \mathcal{N}(f_{t-1} + m_{t-1}^T\delta_{t-1} + \frac{1}{2}\delta_{t-1}^TB_t\delta_{t-1},\; \lambda_t + \delta_{t-1}^TP_{t-1}\delta_{t-1} + \frac{1}{4}\delta_{t-1}^TQ_t\delta_{t-1}) \\
    y_t \mid f_t &\sim \mathcal{N}(f_t, r_t) \\
\end{split}
\end{equation}
We include a scalar parameter $\lambda_t$ in case the local quadratic approximation is inaccurate, ie. when $y_t$ is significantly different from the predicted value. If this occurs, a high value of $\lambda_t$ causes the Kalman gain to become large, throwing away the stale estimate and putting more weight on the new observed function value. 

We pick a value for $\lambda_t$ by maximizing the likelihood of $p(y_t | y_{1:t-1})$. Marginalizing over $f_t$, we get
\begin{equation}
\begin{split}
p(y_{t} | y_{1:t-1}) &= \mathcal{N}\left( \underbrace{u_{t-1} + m_{t-1}^T\delta_{t-1} + \frac{1}{2}\delta_{t-1}^TB_t\delta_{t-1}}_{u_t^-},\; \lambda_t + \underbrace{s_{t-1} +  \delta_{t-1}^TP_{t-1}\delta_{t-1} + \frac{1}{4}\delta_{t-1}^TQ_t\delta_{t-1} + r_t}_{c_t} \right) \\
\end{split}
\end{equation}
Taking the log and writing it out, we get
\begin{equation}
    \log p(y_{t} | y_{1:t-1}) \propto -\frac{1}{2} \left[ \frac{(y_t - u_t^-)^2}{\lambda_t + c_t} + \log (\lambda_t + c_t)\right]
\end{equation}
and its derivative is
\begin{equation}
    -\frac{1}{2}\left[ \frac{-(y_t - u_t^-)^2}{(\lambda_t + c_t)^2} + \frac{1}{\lambda_t + c_t} \right] = -\frac{1}{2}\left[ \frac{-(y_t - u_t^-)^2 + \lambda_t + c_t}{(\lambda_t + c_t)^2} \right]
\end{equation}
Setting this to zero, we get
\begin{equation}
\lambda_t = (y_t - u_t^-)^2 - c_t
\end{equation}
Since the role of $\lambda_t$ is to ensure we are not overconfident in our predictions, and we don't want to deal with negative variance values, we set
\begin{equation}
    \lambda_t^* = \max\{ (y_t - u_t^-)^2 - c_t,\; 0 \}
\end{equation}

As can be seen from Figure~\ref{fig:lambda_vs_s}, this $\lambda_t$ term goes to zero when it is not needed, ie. when the dynamics model is correct, which occurs on MNIST after convergence. It is also a quantity that shows us just how incorrect our dynamics model is, and for the problems we tested, we find that it is significantly smaller than the posterior variance $s_t$. This suggests that it has minimal impact if removed, but we keep it in the algorithm for cases when a quadratic approximation is not sufficient.

\begin{figure}[H]
    \centering
    \includegraphics[width=0.9\linewidth]{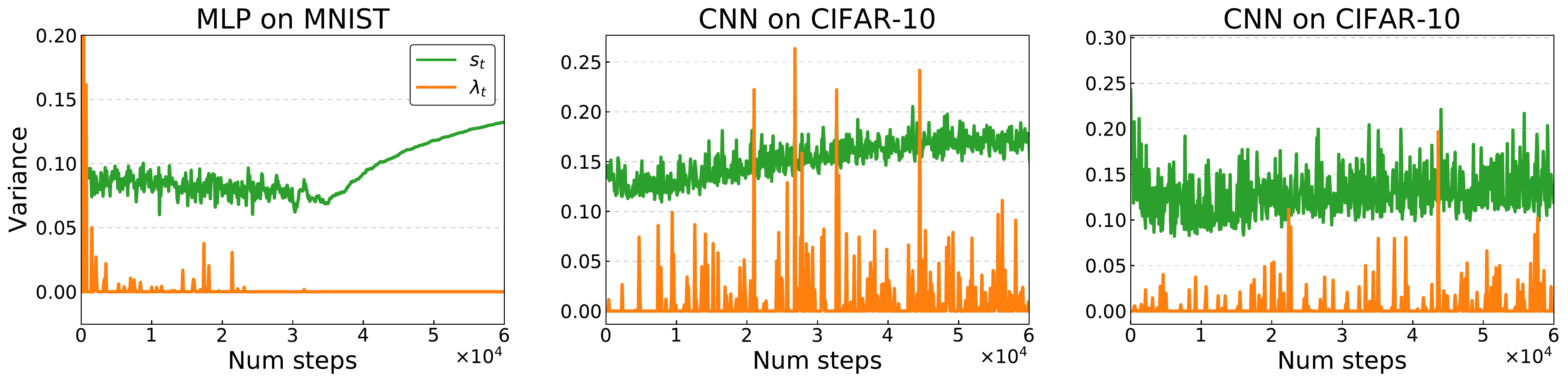}
    \caption{The quantity $2s_t + \lambda_t$ shows up as a constant variance term during step size adaptation. We find that though $\lambda_t$ has an effect only during a few iterations, it is usually small enough to be ignored. This suggests that the quadratic approximation assumption is okay most of the time. Nevertheless, a self-correcting term that is essentially compute-free is a desirable component.}
    \label{fig:lambda_vs_s}
\end{figure}

\subsection{Dealing with negative curvature in step size adaptation}
\label{app:negcurv_lambda}

When the gradient estimating is pointing in a direction of negative curvature, there is a chance that the optimal step size is infinity (Figure~\ref{fig:negcurv_nolambda}). 

\begin{figure}[H]
    \centering
    \begin{subfigure}[b]{0.35\linewidth}
        \includegraphics[width=\linewidth]{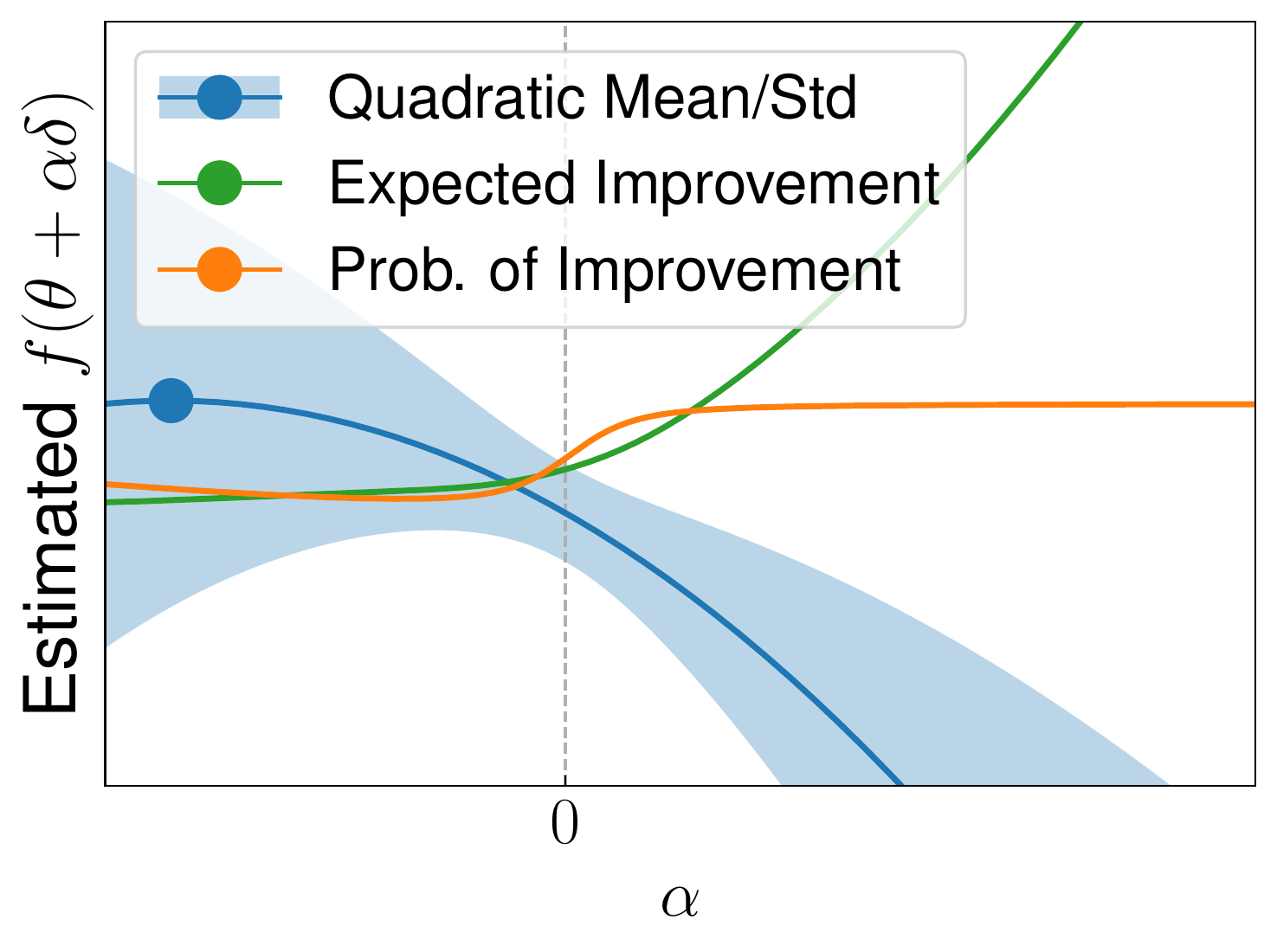}
        \caption{Negative curvature.}
        \label{fig:negcurv_nolambda}
    \end{subfigure}
    \begin{subfigure}[b]{0.35\linewidth}
        \includegraphics[width=\linewidth]{plots/negative_curv2_lambda.pdf}
        \caption{Negative curvature with $\lambda_t||\delta||^6$.}
        \label{fig:negcurv_lambda}
    \end{subfigure}
    \caption{Negative curvature can result in infinite step sizes. An extra correction term to the variance ensures the optimal step size is finite.}
    \label{fig:my_label}
\end{figure}

This occurs when the variance of $f_{t+1} - f_t$ rises slower than the expectation. To handle this situation, we can add a third order correction term, which appears in the variance as a term that scales with $||\delta||^6$. Using the same procedure as inferring a constant $\lambda_t$, we can instead add the term $\lambda_t||\delta||^6$ to the variance. We then choose $\lambda_t$ as
\begin{equation}
    \lambda_t^* = \max\left\{ \frac{1}{||\delta||^6} \left( (y_t - u_t^-)^2 - c_t \right),\; 0 \right\}
\end{equation}
This extra term (if $\lambda_t > 0$) in the variance ensures that variance increases faster than the expectation. Adaptive step sizes based on the probability of improvement will then have an optimal step size that is finite in value (Figure~\ref{fig:negcurv_lambda}). An additional damping effect may be added by lower bounding $\lambda_t$. We did not fully test this approach as the exponential moving averaged curvature used in practice was always positive for our test problems. Incidentally, Figures~\ref{fig:negcurv_nolambda} and~\ref{fig:negcurv_lambda} show that the expected improvement is not a good heuristic as it is extremely large even with this extra term. Moreover, the quadratic approximation will result in negative step sizes.

\section{Using the Current Hessian vs the Previous Hessian}
\label{app:prevhess}

For the dynamics model, we make the choice to use the Hessian at the updated location $B_t$, which is an unbiased estimate of $\nabla^2 f(\theta_{t})$, instead of $B_{t-1}$, the Hessian at $\theta_{t-1}$. Firstly, we made this choice for computational reasons: it is easier to compute the Hessian at $\theta_t$ since we are already computing the gradient $g_t$ evaluated at $\theta_t$. Secondly, we found that using the current Hessian results in better performance and more stability. Figure~\ref{fig:prevhess} shows this on MNIST. For CIFAR-10, we found that using the previous Hessian $B_{t-1}$ resulted in immediate divergence and NaNs, so we do not show those plots.

\begin{figure}[H]
    \centering
    \includegraphics[width=0.45\linewidth]{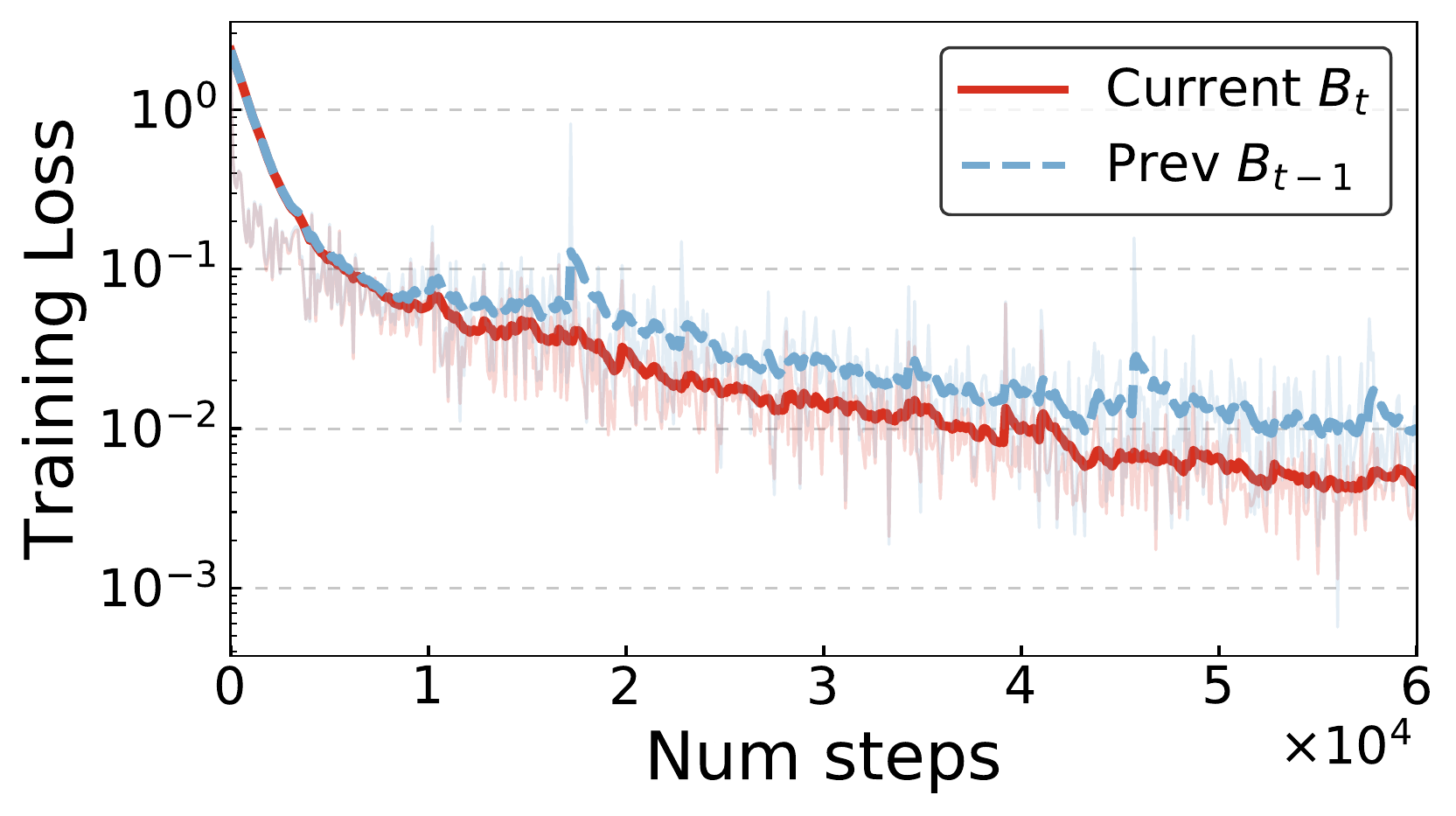}
    \caption{The choice of using current vs previous Hessian on MNIST.}
    \label{fig:prevhess}
\end{figure}

\section{Comparison of Adaptive Step Size Schemes}
\label{app:adaptive_comparison_full}

\begin{figure}[H]
    \centering
    \includegraphics[width=\linewidth]{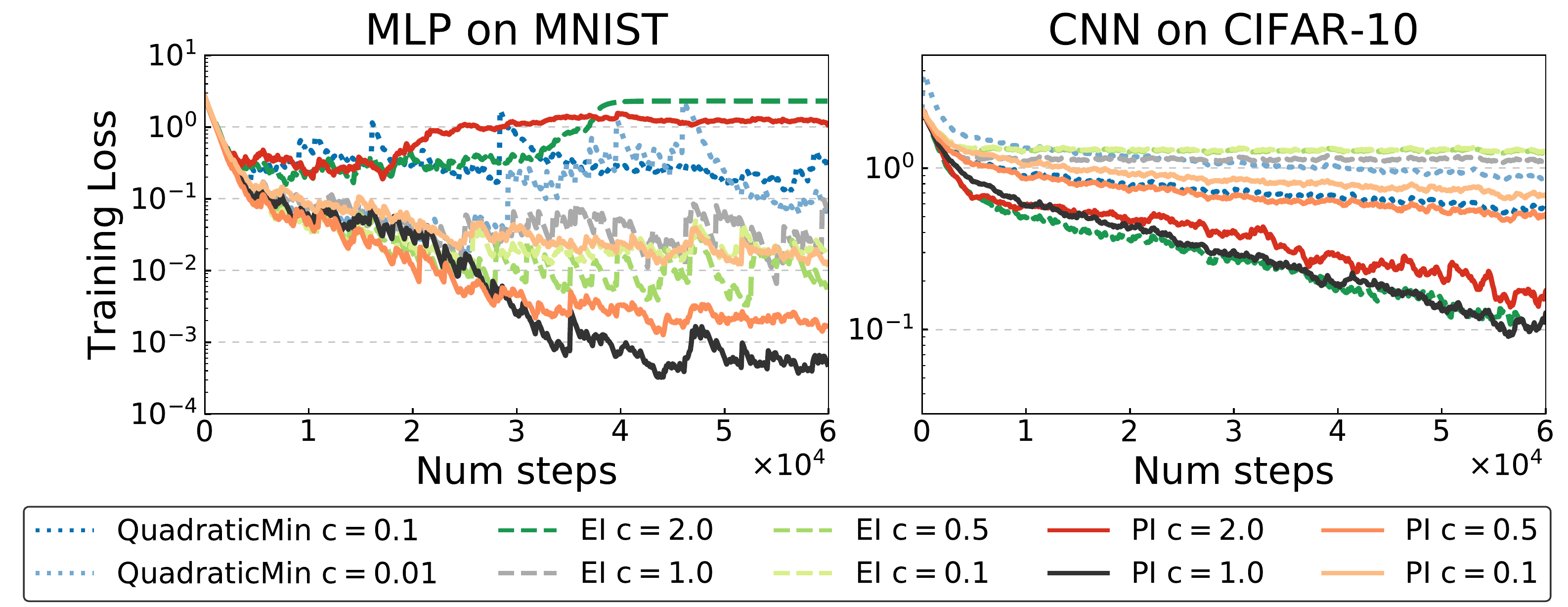}
    \caption{Comparing step sizes with a scaling factor $c$ such that the update is $\theta_t = \theta_{t-1} + c \alpha_t \delta_t$. This comparison includes expected improvement (EI). We note that it performs poorly on MNIST and requires a scaling of 2.0 to match PI on CIFAR-10. Using a smaller scaling factor results in worse performance.}
    \label{fig:adaptive_comparison_full}
\end{figure}

\subsection{Comparison with Tuned Optimizers} \label{app:comparison_with_all}

\begin{figure}
    \includegraphics[width=\linewidth]{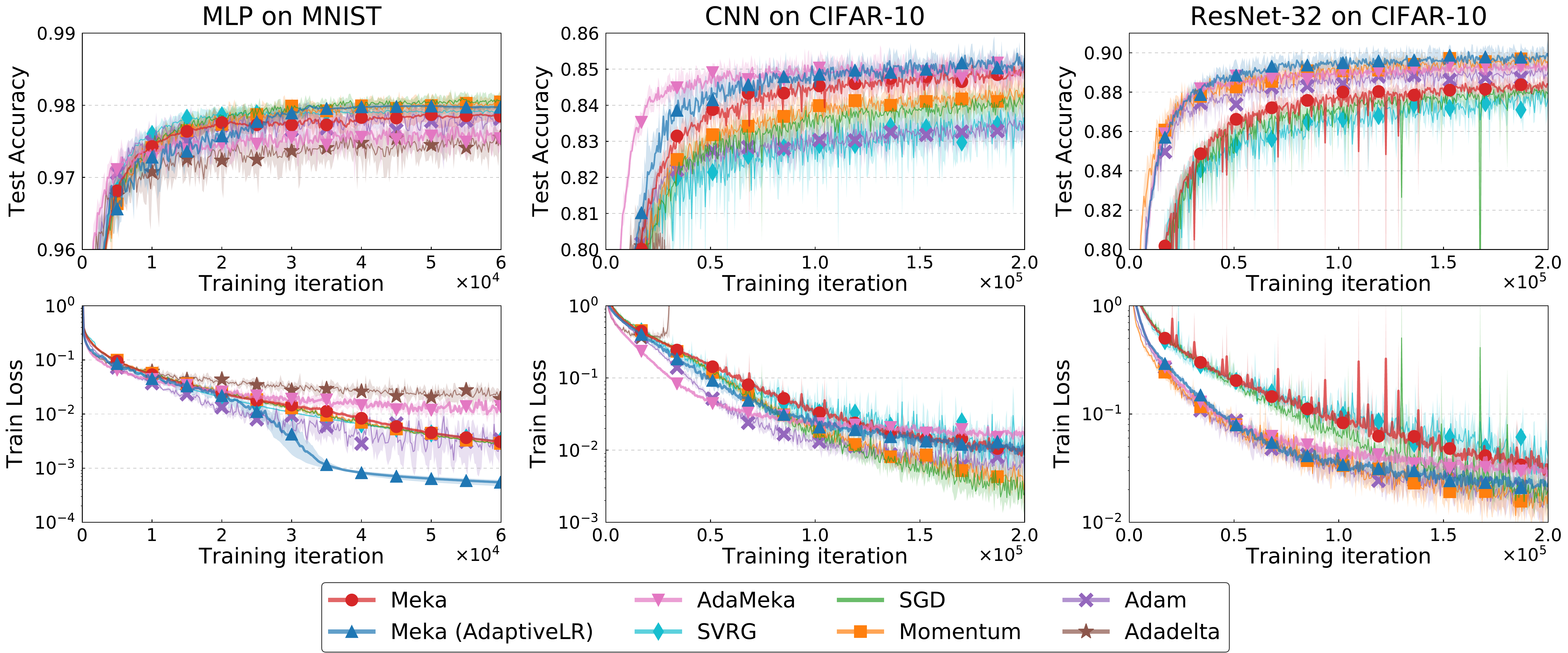}
    \caption{\Meka{} is competitive with optimizers that have additional tunable hyperparameters. 
    Results are averaged over 5 random seeds; shaded regions are 5th and 95th percentiles.}
    \label{fig:all_workloads}
\end{figure}

We measure the performance of our method against a variety of other approaches. We compare with fixed step size versions of \sgd{}, \sgd{} with momentum~\citep{polyak1964some}, and \adam{}~\citep{kingma2014adam}. For these, we tune the step size using grid search. We also compare against \adadelta{}~\citep{zeiler2012adadelta}, which is a competing learning rate-free algorithm, and \svrg{}~\citep{johnson2013accelerating}, an optimization algorithm focused around estimating the full batch gradient, using the same step size as the tuned \sgd. For comparison, we implemented \Meka{} using the same constant learning rate as the tuned \sgd, and \adameka{} with the same learning rate as the tuned \adam. We applied fully adaptive step sizes with \Meka{} update directions; on ResNet-32, we used \adameka{} update directions to mitigate poor conditioning.

Figure~\ref{fig:all_workloads} shows the resulting loss and accuracy curves. As this includes optimizers across a wide range of motivations, we highlight some specific comparisons. \Meka{} generally performs better than \sgd{} in terms of test accuracy, showing that too much stochasticity can hurt generalization. Though both designed with gradient estimation in mind, \Meka{} seems to compare favorable against \svrg{} in terms of performance. We note that the per-iteration costs of \Meka{} were also cheaper as \svrg{} requires two gradient evaluations. With the default learning rate of $1.0$, \adadelta{} performs decently on MNIST but ends up diverging on CIFAR-10. In comparison, our adaptive step sizes converge well and generally outperform fixed step sized \Meka{} (or \adameka{}) in both trainng loss and test accuracy.

\section{Experiment Details}
\label{app:experiment_details}
\subsection{Dataset Description}
We used the official train and test split for MNIST and CIFAR-10. We did not do any data augmentation for MNIST. For CIFAR-10, we normalized the images by subtracting every pixel with the global mean and standard deviation across the training set. In addition to this, unless specified otherwise, we pre-processed the images following \citet{he2016deep} by padding the images by 4 pixels on each side and applying random cropping and horizontal flips.
\subsection{Architecture Description}
All models use the the ReLU~\citep{nair2010rectified} activation function.
\paragraph{MLP}
We used an MLP with 1 hidden layer of 100 hidden units.
\paragraph{CNN} 
We used the same architecture as the ``3c3d'' architecture in \citet{schneider2019deepobs}, which consists of 3 convolutional layers with max pooling, followed by 3 fully connected layers. The first convolutional layer has a kernel size of $5 \times 5$ with stride 1, ``valid'' padding, and 64 filters. The second convolutional layer has a kernel size of $3 \times 3$ with stride 1, ``valid'' padding, and 96 filters. The third convolutional layer has a kernel size of $3 \times 3$ with stride 1, ``same'' padding, and 128 filters. The max pooling layers have a window size of $3 \times 3$ with stride 2. The 2 fully connected layers have 512 and 256 units respectively.
\paragraph{ResNet-32}
Our ResNet-32~\citep{he2016deep} model uses residual blocks based on \citet{he2016identity}. We replaced the batch normalization layers with group normalization~\citep{wu2018group} as batch-dependent transformations conflict with our assumption that the gradient samples are independent, and hinder our method to estimate gradient variance.

\subsection{Optimizer Comparisons Description}
We tuned the step size of \sgd{} in a grid of $\{0.001, 0.01, 0.1, 1.0\}$. We tuned the step size of \sgd{} with momentum, and \adam{} in a grid of $\{0.0001, 0.001, 0.01, 0.1\}$. We chose the best step size of \sgd{} for the variant of \Meka{} with a constant learning rate, and for \svrg. 

The chosen step size for \sgd{} was 0.1 for MNIST, and 0.1 for CIFAR-10. For \sgd{} with momentum, the chosen step size was 0.01 for MNIST, 0.1 for ResNet-32 on CIFAR-10, and 0.001 for CNN on CIFAR-10. The best step size for \adam{} was 0.001 for MNIST and ResNet-32 on CIFAR-10, and 0.0001 for CNN on CIFAR-10.

For \sgd{} with momentum, the momentum coefficient $\gamma$ was fixed to 0.9. For \adam, $\beta_1$, $\beta_2$, $\varepsilon$ were fixed to 0.9, 0.999, and $10^{-8}$ respectively. For \adadelta, $\rho$ and $\varepsilon$ were fixed to 0.95 and $10^{-6}$ respectively. 

\section{Sensitivity of \Meka's Hyperparameters}
\label{app:beta_test}

\begin{figure}[!h]
    \centering
    \includegraphics[width=\linewidth]{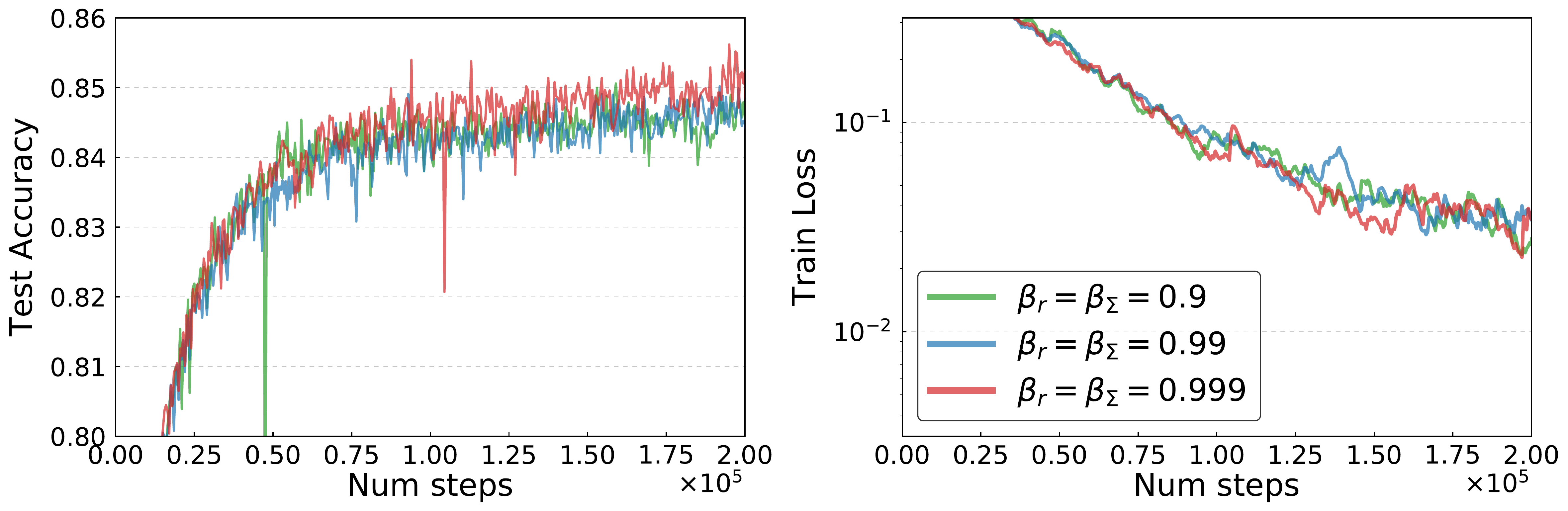}
    \vspace{-1.7em}
    \caption{The performance of \Meka{} with constant learning rate for CNN on CIFAR-10 is not sensitive to the choice of the exponential moving average decay rates $\beta_r$ and $\beta_\Sigma$.}
\end{figure}

\begin{figure}[!h]
    \centering
    \includegraphics[width=\linewidth]{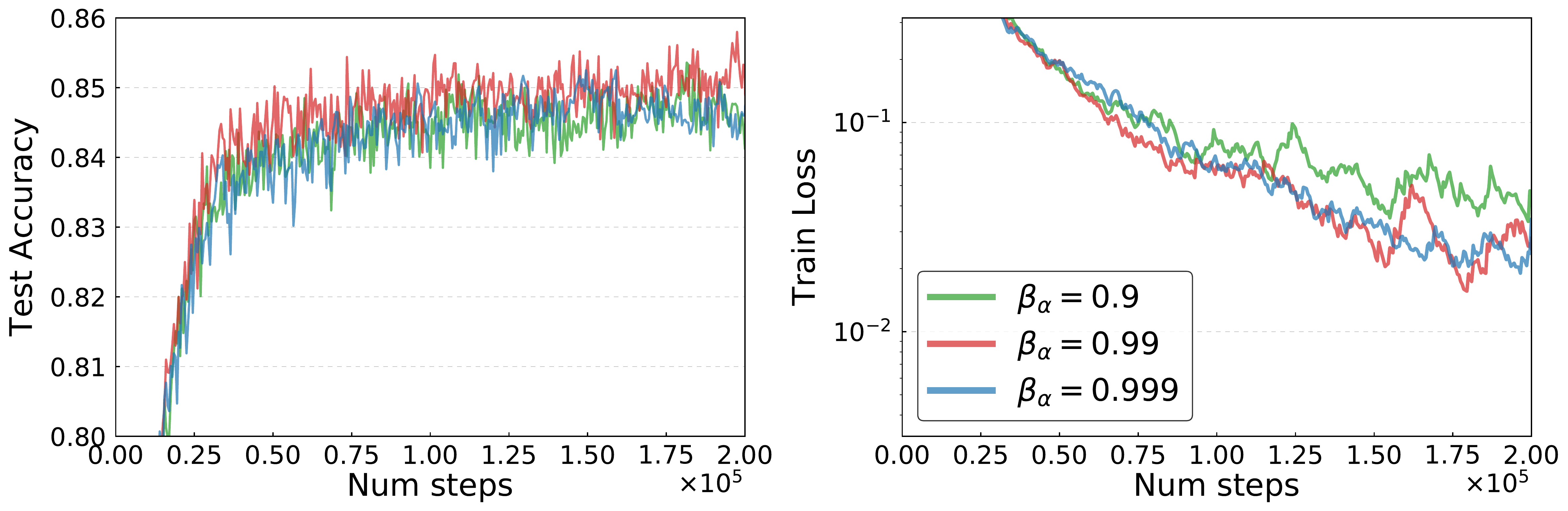}
    \vspace{-1.7em}
    \caption{The performance of \Meka{} with the PI adaptive scheme for CNN on CIFAR-10 is not sensitive to the choice of the exponential moving average decay rate $\beta_\alpha$.}
\end{figure}

\section{Additional Cost of \Meka}
\label{app:mem_compute_cost}

\begin{table}[H]
\caption{The memory cost of using a vectorized map to obtain individual gradients is greater than taking the gradient of a sum over the minibatch, whereas the asymptotic compute cost is the same. $B$ is the batch size, $|\theta|$ is the number of parameters, and $D$ is the number of activations in the model.}
\label{tab:mem_and_compute}
\centering
\begin{tabular}{@{}lcc@{}}
\toprule
 & $\nabla \sum_{i=1}^m f_t^{(i)}$ & $\text{VMap}(\nabla, f_t^{(i)})$ \\ \midrule
Memory & $\mathcal{O}\left(|\theta| + BD\right)$ & $\mathcal{O}\left(B(|\theta| + D)\right)$ \\
Compute & $\mathcal{O}\left(BD|\theta|\right)$ & $\mathcal{O}\left(BD|\theta|\right)$ \\ \bottomrule
\end{tabular}
\end{table}

\begin{table}[H]
\caption{The ratio of the time it takes to complete one iteration for \Meka{} versus \sgd{}. Note that in addition to the vector mapped gradients, we also compute an additional Hessian-vector product. The runtimes are after just-in-time compilation of JAX has settled. Runtimes are tested on the NVIDIA TITAN Xp GPU.}
\label{tab:runtime}
\centering
\begin{tabular}{@{}lllrr@{}}
\toprule
Dataset & Architecture & SGD & \multicolumn{1}{l}{Meka (fixed lr)} & \multicolumn{1}{l}{Meka (PI adaptive)} \\ \midrule
MNIST & MLP & 1.00 & 1.10 & 1.00 \\
CIFAR-10 & CNN & 1.00 & 0.83 & 1.34 \\
CIFAR-10 & ResNet-32 & 1.00 & 1.68 & 2.97 \\ \bottomrule
\end{tabular}
\end{table}

\section{Proofs}

\begin{proof}[Proof of Proposition~\ref{proposition:quadratic_toy_problem_convergence}]
A standard Lipschitz bound yields
\begin{equation}
    \begin{split}
        \E[f_{t+1}] & \leq \E [f_t] - \alpha \E[ \nabla f_t^T m_t] + \frac{L\alpha^2}{2} \E [\Vert m_t\Vert^2] \\
        & \leq \E[f_t] - \frac{\alpha}{2}\left(\E[2 \nabla f_t^Tm_t - \Vert m_t\Vert^2]  \right) \\
        & = \E[f_t] - \frac{\alpha}{2} \left( \E[ \Vert \nabla f_t\Vert^2] - \E[\Vert m_t - \nabla f_t\Vert^2]  \right).
    \end{split}
\end{equation}
Using strong convexity ($\Vert \nabla f_t\Vert^2 \geq 2\mu (f_t - f_\ast)$) and subtracting $f_\ast$ from both sides results in
\begin{equation}
    \label{eq:convergence_linear_plus_variance}
    \E[f_{t+1} - f_\ast] \leq (1 - \alpha\mu) \E[f_t - f_\ast] + \frac{\alpha}{2}\E[\Vert m_t - \nabla f_t\Vert^2]
\end{equation}
So in each step, we get a multiplicative decrease in the expected function value (left term) but we add a term that depends on the variance of our filtered gradient estimate $m_t$.
So, in essence, to establish convergence, we have to show that $\E[\Vert m_t - \nabla f_t\Vert^2]$ decreases to zero sufficiently fast.

Since all assumptions of the Kalman filter are satisfied, we know that $\E[m_t] = \E[\nabla f_t]$ and $\E[(m_t - \nabla f_t)(m_t - \nabla f_t)^T] = P_t$.
Hence, $\E[\Vert m_t - \nabla f_t\Vert^2] = \trace(P_t)$.
We now show inductively that $P_t = \frac{1}{t+1}\Sigma$.
This holds for $t=0$ by construction.
Assume it holds for arbitrary but fixed $t-1$.
Then
\begin{equation}
    K_t = P_{t-1}(P_{t-1} + \Sigma)^{-1} = \frac{1}{t} \Sigma \left( \frac{1}{t}\Sigma + \Sigma \right)^{-1} = \frac{1}{t} \Sigma \left( \frac{t+1}{t} \Sigma \right) ^{-1} = \frac{1}{t+1} I
\end{equation}
and, thus,
\begin{equation}
    P_t = (I - K_t)P_{t-1} = \left( I - \frac{1}{t+1} I \right) \frac{1}{t} \Sigma = \frac{1}{t+1} \Sigma
\end{equation}

Plugging $\E[\Vert m_t - \nabla f_t\Vert^2] = \trace(P_t) = \frac{1}{t+1} \trace(\Sigma)$ back into Eq.~\eqref{eq:convergence_linear_plus_variance} and introducing the shorthands $e_t = \E[f_t -f_\ast]$ and $\sigma^2 := \trace(\Sigma)$ reads
\begin{equation}
    e_t \leq (1-\alpha\mu)e_{t-1} + \frac{\alpha\sigma^2}{2} \frac{1}{t}.
\end{equation}
Iterating backwards results in
\begin{equation}
    e_t \leq (1-\alpha\mu)^t e_0 + \frac{\alpha \sigma^2}{2} \sum_{s=0}^{t-1} \frac{(1-\alpha\mu)^{t-1-s}}{s+1} = (1-\alpha\mu)^t e_0 + \frac{\alpha \sigma^2}{2} \sum_{s=1}^t \frac{(1-\alpha\mu)^{t-s}}{s}.
\end{equation}
Lemma~\ref{lemma:geometric_times_harmonic_sequence} shows that the sum term is $O(1/t)$. The first (exponential) term is trivially $O(1/t)$, which concludes the proof.
\end{proof}

\begin{lemma}
\label{lemma:geometric_times_harmonic_sequence}
Let $0< c < 1$ and define the sequence (for $t\geq 1$)
\begin{equation*}
    a_t = \sum_{s=1}^t \frac{c^{t-s}}{s}.
\end{equation*}
Then $a_t\in O(\frac{1}{t})$.
\end{lemma}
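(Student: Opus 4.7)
The plan is to exploit the obvious tension in the summand $c^{t-s}/s$: the factor $c^{t-s}$ is tiny when $s$ is small (because $t-s$ is large), while the factor $1/s$ is tiny when $s$ is large. Neither factor alone gives the desired rate, but splitting the sum according to which factor is doing the work should yield an $O(1/t)$ bound cleanly.

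Concretely, I would fix an integer split point near the midpoint, say $m = \lfloor t/2 \rfloor$, and write
\begin{equation*}
a_t \;=\; \sum_{s=1}^{m} \frac{c^{t-s}}{s} \;+\; \sum_{s=m+1}^{t} \frac{c^{t-s}}{s}.
\end{equation*}
For the first (low-$s$) piece, I would use $c^{t-s} \leq c^{t-m} \leq c^{t/2}$ for every $s \leq m$, pull this factor out, and bound the remaining harmonic-type sum by $1 + \ln m = O(\log t)$. This yields a bound of order $c^{t/2} \log t$, which decays exponentially and is therefore $o(1/t)$. For the second (high-$s$) piece, I would use $1/s \leq 1/(m+1) \leq 2/t$ (with a small adjustment for odd $t$), pull that out, and then reindex $k = t - s$ so that the remaining factor becomes $\sum_{k=0}^{t-m-1} c^k \leq \sum_{k=0}^{\infty} c^k = 1/(1-c)$, a finite constant since $0 < c < 1$. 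This gives a bound of $2 / ((1-c)\, t)$, which is explicitly $O(1/t)$.

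Adding the two bounds, the exponentially small term is absorbed into the $O(1/t)$ term for all sufficiently large $t$, and a trivial constant handles the finitely many small $t$. The main obstacle is essentially bookkeeping: choosing the split point cleanly (so the two halves overlap correctly and the floor/ceiling does not cause off-by-one issues) and verifying that the constant $1/(1-c)$ and the exponential factor $c^{t/2}$ behave as claimed. Nothing deeper is needed; the lemma is a standard decoupling of a geometric-harmonic convolution.
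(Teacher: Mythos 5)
Your proof is correct, and it takes a genuinely different route from the paper. You use a split-sum (decoupling) argument: cut at $m=\lfloor t/2\rfloor$, bound the low-$s$ block by $c^{t/2}(1+\ln m)=o(1/t)$ using the geometric factor, and bound the high-$s$ block by $\frac{2}{(1-c)t}$ using the harmonic factor together with $\sum_{k\ge 0}c^k=\frac{1}{1-c}$. The paper instead proceeds by induction: it picks the threshold $T=\lceil c/(1-c)\rceil$ so that $c\frac{t+1}{t}<1$ for $t\geq T$, sets $M=\max\bigl(Ta_T,\;(1-c\frac{T+1}{T})^{-1}\bigr)$, and uses the recursion $a_{t+1}=\frac{1}{t+1}+c\,a_t$ to propagate the bound $a_t\leq M/t$. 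The trade-offs are minor: your argument yields an explicit asymptotic constant ($2/(1-c)$, within a factor of two of the true limit $1/(1-c)$ of $t\,a_t$) and requires no clever choice of threshold or induction hypothesis, at the cost of a little bookkeeping with the split point and the absorption of the exponentially small term; the paper's induction is more compact once $T$ and $M$ are chosen, but its constant $M$ is defined implicitly through $a_T$. Both are elementary and complete, so your proposal stands as a valid alternative proof.
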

\begin{proof}
Let $T$ be the smallest index such that $c\frac{T+1}{T} < 1$, i.e., $T=\lceil c/ (1-c) \rceil$.
Define
\begin{equation}
    M = \max\left( T a_T, \left(1- c\frac{T+1}{T}\right)^{-1} \right)
\end{equation}
This ensures that $a_T \leq \frac{M}{T}$ and
\begin{equation}
    \label{eq:bound_for_M}
    \frac{1}{M} + c\frac{t+1}{t} \leq 1
\end{equation}
for all $t\geq T$.
We now show inductively that $a_t \leq \frac{M}{t}$ for all $t\geq T$.
It holds for $t=T$ by construction of $M$.
Assume it holds for some $t\geq T$.
Then
\begin{equation}
    \begin{split}
        a_{t+1} & = \sum_{s=1}^{t+1} \frac{c^{t+1-s}}{s} = \frac{1}{t+1} + c \underbrace{ \sum_{s=1}^t \frac{c^{t-s}}{s} }_{=a_t \leq M/t} \\
        & \leq \frac{1}{t+1} + c\frac{M}{t} = \frac{M}{t+1} \underbrace{ \left( \frac{1}{M} + c\frac{t+1}{t} \right) }_{\leq 1 \text{ by Eq.~\eqref{eq:bound_for_M}}} \leq \frac{M}{t+1}.
    \end{split}
\end{equation}
\end{proof}

\end{document}